\renewcommand{\algorithmiccomment}[1]{/*~#1~*/}
\newcommand{\algname}{MCD-DD}
\theoremstyle{definition}
\newtheorem{theorem}{Theorem}
\newcolumntype{L}[1]{>{\raggedright\let\newline\\\arraybackslash\hspace{0pt}}m{#1}}
\newcolumntype{C}[1]{>{\centering\arraybackslash}p{#1}}
\begin{document}

\title{Online Drift Detection with Maximum Concept Discrepancy}


\author{Ke Wan$^{*}$}
\affiliation{%
  \institution{University of Illinois at Urbana-Champaign}
  \city{Illinois}
  \country{USA}
}
\email{kewan2@illinois.edu}

\author{Yi Liang$^{*}$}
\affiliation{%
  \institution{Fudan University}
  \city{Shanghai}
  \country{China}
}
\email{yliang23@m.fudan.edu.cn}

\author{Susik Yoon\textsuperscript{\S}}
\affiliation{%
  \institution{Korea University}
  \city{Seoul}
  \country{Korea}
}
\email{susik@korea.ac.kr}

\renewcommand{\shortauthors}{Ke Wan, Yi Liang, and Susik Yoon}


\begin{abstract}
Continuous learning from an immense volume of data streams becomes exceptionally critical in the internet era.
However, data streams often do not conform to the same distribution over time, leading to a phenomenon called concept drift.
Since a fixed static model is unreliable for inferring concept-drifted data streams, establishing an adaptive mechanism for detecting concept drift is crucial.
Current methods for concept drift detection primarily assume that the labels or error rates of downstream models are given and/or underlying statistical properties exist in data streams. These approaches, however, struggle to address high-dimensional data streams with intricate irregular distribution shifts, which are more prevalent in real-world scenarios.
In this paper, we propose \algname{}, a novel concept drift detection method based on maximum concept discrepancy, inspired by the maximum mean discrepancy. Our method can adaptively identify varying forms of concept drift by contrastive learning of concept embeddings without relying on labels or statistical properties. With thorough experiments under synthetic and real-world scenarios, we demonstrate that the proposed method outperforms existing baselines in identifying concept drifts and enables qualitative analysis with high explainability.

\end{abstract}

\begin{CCSXML}
<ccs2012>
<concept>
<concept_id>10002951.10003227.10003351.10003446</concept_id>
<concept_desc>Information systems~Data stream mining</concept_desc>
<concept_significance>500</concept_significance>
</concept>
</ccs2012>
\end{CCSXML}

\ccsdesc[500]{Information systems~Data stream mining}

\keywords{Concept Drift Detection; Maximum Concept Discrepancy}

\maketitle

\def\thefootnote{*}\footnotetext{Equal contributions. $^{\S}$ Corresponding author.}\def\thefootnote{\arabic{footnote}}

\section{Introduction}
\subsection{Background and Motivation}
Continuously learning from evolving data streams is crucial for numerous online services to derive real-time insights\,\cite{bifet2023machine, ustory, nets}.
However, in many real-world scenarios, data streams from different times may exhibit distinct characteristics\,\cite{pdsum, trirat2023mg, kim2022covid}. 
For instance, a previously stable weather pattern might incrementally change due to global warming with unprecedented high temperatures, leading to unpredictable fluctuations in temperature, wind speed, and humidity.
This phenomenon is called \emph{concept drift} in data streams\,\cite{lu2018learning}, indicating that data at different times follows distinct probability distributions.
Developing methods for continuously detecting whether a data stream has undergone concept drift is imperative, since it is impractical to employ consistent modeling to the concept-drifted data streams (e.g., a weather prediction model needs to be updated after unprecedented temperature changes are observed). 

Current methods for detecting concept drift online fall into two categories: error rate-based or data distribution-based\,\cite{bayram2022concept, song2007statistical, gama2004learning}. A common tactic involves constructing a hypothesis test statistic to determine whether error rates of downstream models or data samples from different periods adhere to the same probability distribution under a certain significance level. While this approach is favored for its interpretability and strong statistical foundation, distinguishing between natural fluctuations and actual drifts poses challenges, especially in the context of complex, evolving data streams. The sparsity, noise, and high dimensionality commonly observed in real-world data streams can make statistical approaches ineffective. Additionally, obtaining error rates of downstream models is not always feasible, as true labels may not be readily available.

Meanwhile, in machine learning, kernel methods are commonly used to map data into high-dimensional spaces\,\cite{elisseeff2001kernel}, improving its representation for downstream tasks such as classification and clustering with more distinct separations in the projected space. Likewise, kernel methods can be used to transform a set of sampled data into a space where the existing concepts can be effectively represented, facilitating the detection of potential concept drifts. However, traditional kernels like the Gaussian kernel\,\cite{gretton2009fast} are limited in their ability to detect concept drifts. They are designed with a deterministic mapping function, making them ill-suited for the ever-changing distributions of data streams.  While deep kernels offer more flexibility\,\cite{wilson2016deep, liu2020learning}, adapting them to address concurrently evolving concepts, especially in unsupervised settings, remains a significant challenge. The computational costs associated with updating these kernels repeatedly can be prohibitive.


\subsection{Main Idea and Challenges}
Detecting concept drifts from data streams presents numerous challenges, mainly centered around the representation of ever-changing data distributions (i.e., concept representation) and the measurement of their differences (i.e., drift quantification). It also necessitates the continuous monitoring of the dynamic shifts in data distributions as they evolve, which is crucial for accurately identifying arbitrary drifts (i.e., online updates). Furthermore, in real-world scenarios, there is often a lack of ground truth labels for concept drifts as well as downstream tasks, making an unsupervised approach (i.e., data distribution-based) preferable to a supervised approach (i.e., error rate-based) in practice. To address these objectives, we propose a novel method for continuously identifying concept drifts in an 
\emph{unsupervised} and \emph{online} manner, that can effectively handle arbitrary data distributions with high interpretability.

The main idea of this work is to employ a new measure \emph{Maximum Concept Discrepancy} for concept drift detection, inspired by the maximum mean discrepancy\,\cite{smola2006maximum} with a kernel function. Through a deep neural network, we encode a set of sample data points in a short time period into a compact representation that captures the concept observed during the period. We leverage contrastive learning accompanied by time-aware sampling strategies to learn the embedding space of concepts. This entails the generation of positive sample pairs drawn from temporally proximate distributions and that of negative sample pairs from temporally distant distributions, while also introducing controlled perturbations. The embedding space is continuously updated to bring positive samples closer together and push negative samples further apart. Concept drifts are then identified by evaluating the discrepancy between the representations of concepts in consecutive time periods. In addition, the maximum concept discrepancy between two concepts can be bounded with a statistical significance. It can function as a theoretical threshold for detecting concept drifts, providing high interpretability and practicality for our method. In consequence, our method is capable of continuously identifying various types of concept drift from data streams without any supervision. It effectively addresses the aforementioned challenges for concept drift detection while keeping the advantages of both online statistical approaches and offline deep kernel-based approaches.

\subsection{Summary}
As a concrete implementation of our main idea, we propose an algorithm \emph{\algname{} (Maximum Concept Discrepancy-based Drift Detector)}, aiming at unsupervised online concept drift detection from data streams.
The main contributions of this work can be summarized as follows:
\begin{itemize}[leftmargin=10pt, noitemsep]
\item 
To the best of our knowledge, this is the first work to propose a dynamically updated measure, \emph{maximum concept discrepancy}, for unsupervised online concept drift detection.
\item 
We propose a novel method \algname{} equipped with the sample set encoder and drift detector, optimized by contrastive objective with time-aware sampling strategies. For reproducibility, the source code of \algname{} is publicly available\footnote{\url{https://github.com/LiangYiAnita/mcd-dd}}.
\item 
Theoretical analysis of learning the maximum concept discrepancy provides its statistical interpretation and complexity.
\item 
Comprehensive experiments are conducted on 11 data sets with varying complexities of drifts.
\algname{} achieves state-of-the-art results in three performance metrics and demonstrates better interpretability in qualitative analysis, compared with baselines. 
\vspace{-0.1cm}
\end{itemize}

\section{Related Work}

\subsection{Concept Drift Detection}
Concept drift detection is essential in employing a model robustly in data streams \,\cite{lu2018learning, gama2014survey, agrahari2022concept, bayram2022concept}.
Error rate-based drift detection is the most commonly used supervised method for detecting concept drift \,\cite{liu2017fuzzy, bifet2007learning, xu2017dynamic, frias2014online, shao2014prototype}.
This approach continuously monitors the performance of downstream models in data streams. It relies on a trained predictive model and assesses whether concept drift has occurred by examining the consistency of the model's predictive performance over different time intervals\,\cite{bayram2022concept}.
For an unsupervised approach\,\cite{gemaque2020overview}, it is common to conduct statistical tests on the two samples from different periods to determine whether they originate from the same concept \cite{rabanser2019failing, kifer2004detecting, JMLR:v13:gretton12a, 7745962, liu2020learning}, called data distribution-based detection, which is the scope of this work. While some error rate-based detectors\,\cite{raab2020reactive, bifet2007learning} can be adopted for this setting, it is not straightforward to apply them to multivariate data streams. It is also worth noting that some recent works try variants for concept drift detection with a pre-trained model\,\cite{yu2023type, cerqueira2023studd}, active learning\,\cite{yu2021automatic}, imbalanced\,\cite{korycki2021concept} or resource-constrained\,\cite{wang2021clear} streaming settings.

\subsection{Contrastive Learning in Data Streams}

Contrastive learning, as an effective self-supervised learning paradigm \cite{chen2020simple}, is widely applied in various detection tasks in data streams\,\cite{yoon2023scstory, wang2021clear, wang2022online}. The nature of data streams with scarce or delayed labels and lack of external supervision leads to the adoption of continual learning with contrastive losses. The pseudo-labeling for preparing positive and negative samples is a critical design factor and its strategy ranges from model confidence-based\,\cite{yoon2023scstory}, learnable focuses\,\cite{wang2022online}, to class prototype\,\cite{wang2021clear} tailored for downstream tasks. Despite the advancements in contrastive learning, current techniques have yet to be explored for learning separable embeddings of probability distributions representing varying concepts or for application in two-sample tests with statistical bounds, both of which are addressed in this study.

\subsection{Maximun Mean Discrepancy}
The utilization of Maximum Mean Discrepancy (MMD) has been widespread, primarily serving to map data into high-dimensional spaces and thereby enhancing separability for downstream tasks\,\cite{hofmann2008kernel}. MMD has been also actively applied for designing generative models\,\cite{dziugaite2015training, li2017mmd} and detecting whether two samples originate from the same distribution\,\cite{gretton2006kernel} with the Gaussian kernel function\,\cite{JMLR:v13:gretton12a} or the deep kernels\,\cite{liu2020learning} to achieve greater flexibility and expressiveness. The idea of Maximum Concept Discrepancy (MCD) in this study draws inspiration from MMD-based approaches but is specifically tailored for unsupervised online concept drift detection by integrating a deep encoder for sample sets to represent data distributions and continuous learning strategies to dynamically optimize the projected space encompassing varying concepts.

\section{Preliminaries}

\subsection{Concept Drift}
Concept drift is a phenomenon referring to the arbitrary changes in the statistical properties of a target domain of data over time. Formally, concept drift at time $t$ is defined as the change in the joint probability of data points $X$ and labels $y$ at time $t$, denoted as $P_t(X,y) \neq P_{t+1}(X,y)$. Concept drift primarily originates from one of the following three sources\,\cite{lu2018learning}: (i) $P_t(Y|X) \neq P_{t+1}(Y|X)$, when the conditional distribution of the target variable $Y$ given the covariate $X$ undergoes drift; (ii) $P_t(X) \neq P_{t+1}(X)$, when the distribution of the covariate experiences drift; and (iii) combination of (i) and (ii). In addition to varying sources, concept drift can also be distinguished into four types based on the specific nature of the drift occurrence: sudden, reoccurring, gradual, and incremental. For additional references, we direct readers to recent surveys\,\cite{bayram2022concept, agrahari2022concept, lu2018learning}. This work aims to develop an unsupervised method for detecting various types of drift caused by the source described in (ii).

\subsection{Problem Setting}
\label{sec:problem_setting}
\begin{figure}[!t]
    \centering
    \includegraphics[width=\columnwidth]{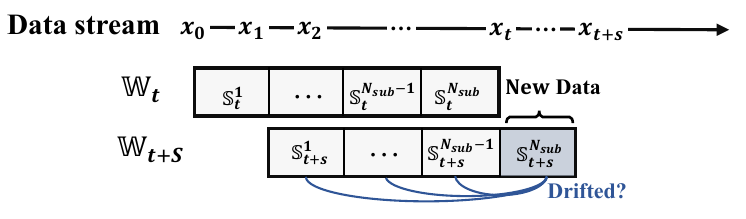}
    \vspace{-0.7cm}
    \caption{Unsupervised online concept drift detection over sliding window $\mathbb{W}$ with sub-windows $\mathbb{S}$.}
    \vspace{-0.5cm}
    \label{fig:problem_setting}
\end{figure} 

Given a continuously evolving data stream $\mathcal{X} = \{ x_t \}_{t=0}^{\infty}$, we maintain the latest context of the data stream by employing a \emph{sliding window} $\mathbb{W}_t$ of size $W$ updated by a slide of size $S$ (i.e., $\mathbb{W}_t = \{x_{t-i} \}_{i=0}^{W-1}$ and $\mathbb{S}_t = \{x_{t-i}\}_{i=0}^{S-1}$). The window and slide sizes can be defined either in terms of the number of data points or a time period. 

Then, a window $\mathbb{W}_t$ consists of non-overlapping slides indexed by $j=1, \ldots, N_{sub}$ where $N_{sub} = W/S$ is the number of slides in a window (i.e., 
$\mathbb{W}_t = \bigcup_{j=1}^{N_{sub}}\mathbb{S}_t^j$ and $\mathbb{S}_t^j=\{x_{t-(N_{sub}-j)*S-i}\}_{i=0}^{S-1}$).
In the rest of the paper, we use the term \emph{sub-window} instead of slide for consistency. It is worth noting that the context within a sub-window is set to be sufficiently compact to ensure that the data points it contains adhere to the same underlying distribution.

For every sliding window in $\mathcal{X}$, the problem of unsupervised online concept drift detection is to identify whether the concept drift has occurred in a new sub-window $\mathbb{S}_t^{N_{sub}}$ compared with the existing data points in the current window $\mathbb{W}_t \setminus \mathbb{S}_t^{N_{sub}}$, without using any labels for drifts and downstream tasks (see Figure \ref{fig:problem_setting}).

\subsection{Maximum Mean Discrepancy}
Maximum Mean Discrepancy (MMD)\,\cite{smola2006maximum} is a statistical measure that compares two probability distributions through a kernel, especially when the distributions are unknown and only samples are available.
MMD evaluates the distance between the mean embeddings of two distributions in the Reproducing Kernel Hilbert Space (RKHS) $\mathcal{H}$\,\cite{berlinet2011reproducing}.
Given $X \sim P$, $Y \sim Q$, and a kernel $f(\cdot)$, we have:
\begin{equation}
    \text{MMD}(P, Q) = \sup_{f \in \mathcal{H}, \|f\|_{\mathcal{H}} \leq 1} \| \mathbb{E}[f(X)] - \mathbb{E}[f(Y)] \|_2.
\end{equation}
When we have samples $\{X_i\}_{i=1}^n$ from probability distribution $P$ and $\{Y_i\}_{i=1}^n$ from probability distribution $Q$, the empirical Maximum Mean Discrepancy (MMD) can be written as:
\begin{equation}
\label{eq:mmd}
    \text{MMD}(P, Q) = \sup_{f \in \mathcal{H}, \|f\|_{\mathcal{H}} \leq 1} \| \frac{1}{n}\sum_{i=1}^nf(X_i) - \frac{1}{n}\sum_{i=1}^nf(Y_i)] \|_2.
\end{equation}

\begin{figure*}[!t]
    \centering
    \includegraphics[width=\textwidth]{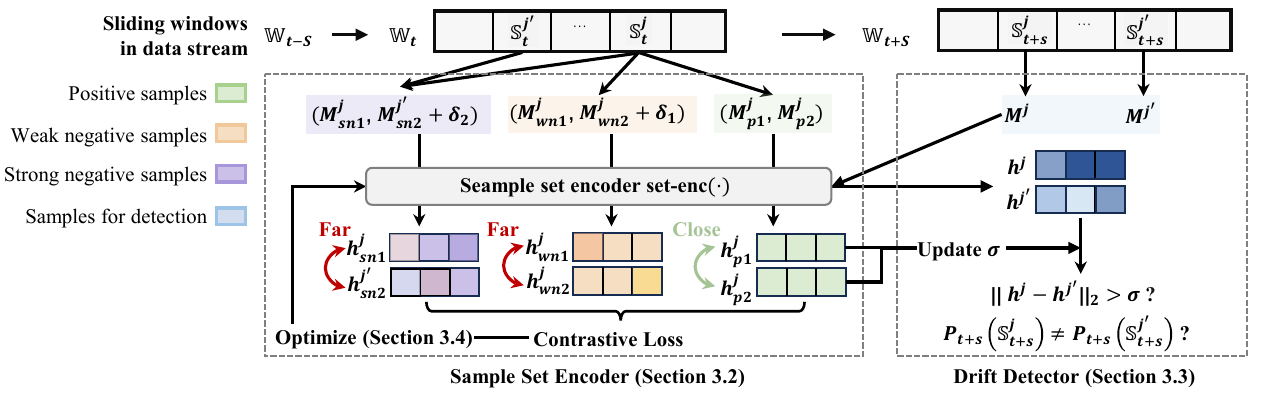}
    \vspace{-0.6cm}
    \caption{Overall procedure of \algname{}. Sub-windows are encoded and compared to derive MCD for drift detection.}
    \vspace{-0.4cm}
    \label{fig:overall_method}
\end{figure*}

\section{Proposed Method}

\subsection{Overview}
The proposed method \algname{} exploits \emph{maximum concept discrepancy} for detecting concept drift, enabled by a deep encoder for data distribution embedding and contrastive learning for effective unsupervised training. The overall procedure of \algname{} is illustrated in Figure \ref{fig:overall_method} and outlined in Algorithm \ref{alg:overall}. For each new sliding window, \algname{} follows the prequential evaluation scheme (i.e., test-and-train)\,\cite{Gam13}. First, \algname{} samples sets of data points in sub-windows and embeds them through a sample set encoder. The discrepancy between sample sets from adjacent sub-windows is calculated, and if it exceeds a threshold, \algname{} asserts that concept drift has occurred between them.
Second, the sample set encoder is updated by considering the sliding window as the context for learning the latest concepts. Specifically, we treat sample sets from the same sub-window as positive pairs and construct negative pairs by leveraging temporal gaps between sub-windows and noise augmentation to distort the original distribution. In the meantime, \algname{} dynamically adjusts the threshold for drift detection in the next sliding window by analyzing the positive sample pairs. Finally, the encoder is updated by aiming to minimize the distance between embeddings of positive sample pairs while simultaneously maximizing the distance between those of negative pairs.

\begin{algorithm}[t!]
\begin{algorithmic}[1]
\caption{\bf Overall Procedure of \algname{}}
\label{alg:overall}
\REQUIRE Data stream $\mathcal{X} = \{ x_t \}_{t=0}^{\infty}$, a sample set encoder $\text{set-enc}()$ with an encoding function $f(\cdot)$.
\STATE Initialize the parameter $\theta$ of $f(\cdot)$ and the MCD threshold $\sigma$
    \FOR{every sliding window $\mathbb{W}_t$ \textbf{in} $\mathcal{X}$} 
        \STATE \algorithmiccomment {{\color{blue} {\sc 1. Drift Detection}}}
        \STATE Obtain sample sets  $M^j$  for $\mathbb{S}_t^{j} \subset \mathbb{W}_t$
        \STATE Obtain concept representations $\{ h^j=\text{set-enc}(M^j)\}_{j=1}^{N_{sub}}$
        \STATE Report drifted \textbf{if} $MCD(P_t(\mathbb{S}_t^{N_{sub}}) , P_t(\mathbb{S}_t^{N_{sub}-1})) > \sigma$
        \STATE \algorithmiccomment {{\color{blue} {\sc 2. Encoder Update}}}
        \STATE Obatin positive, weak/strong negative samples
        \STATE Calculate the loss $\mathcal{L}$ by \cref{loss}
        \STATE Update $\theta$ by $\mathcal{L}$
        \STATE Update $\sigma$ from the positive samples
    \ENDFOR
\end{algorithmic}
\end{algorithm}

\subsection{Sample Set Encoder}

We use a set of data points sampled in each sub-window to estimate its data distribution that represents a concept in the sub-window. Specifically, for a given sub-window $\mathbb{S}_t^j \subset \mathbb{W}_t$, we perform sampling without replacement from it to obtain a \emph{sample set} of size $m$:

\begin{equation}
\small
\label{eq:sample_set}
    M^j = \{x_i \in \mathbb{S}_t^j \}_{i=1}^m.
\end{equation}

The sample set is passed to a sample set encoder, denoted as $\text{set-enc}()$. It translates the probability distribution $P_t(\mathbb{S}_t^j)$ of the sub-window into the compact representation in the projected space, which we call \emph{concept representation} $\textbf{h}^j$ of $\mathbb{S}_t^j$:
\begin{equation}
\small
\label{eq:sample_set_encoder}
    \textbf{h}^j= \text{set-enc}(M^j) = \frac{1}{m}\sum_i^mf(x_i \in M^j ),
\end{equation}
where $f(\cdot)$ is an encoding model with a deep neural network.

\subsection{Drift Detector}
\label{sec:drift_detector}
For each sub-window $\mathbb{S}_t^j$ in a sliding window $\mathbb{W}_t$, we obtain the concept representations $\{ h^j \}_{j=1}^{N_{sub}}$ by the sample set encoder. Then, to effectively quantify the distance between the concept representations from two adjacent sub-windows, we introduce a new measure \emph{Maximum Concept Discrepancy (MCD)} inspired by MMD:

\begin{equation}
\label{eq:mcd}
\small
    \text{MCD}(P, Q) = \sup_{ \|f\|_2 \leq L} \| \mathbb{E}[f(X)] - \mathbb{E}[f(Y)] \|_2,
\end{equation}
where the function $f(\cdot)$ is approximated by a deep neural network and is constrained to be Lipschitz continuous\,\cite{gouk2021regularisation}. This ensures the convergence of optimizing $f$ and prevents MCD from becoming infinitely large. Moreover, it makes MCD between two sets of independent data from the same distribution bounded, providing the theoretical foundation for our drift detection.

When we have samples $M^j \sim P$ and $M^{j'} \sim Q$ by \cref{eq:sample_set}, the empirical MCD can be derived from \cref{eq:mcd} and \cref{eq:sample_set_encoder}  as:

\begin{equation}
\small
\begin{aligned}
    \text{MCD}(P, Q) &= \sup_{\|f\|_2 \leq L} \| \frac{1}{m}\sum_{i=1}^mf(x_i \in M^j) - \frac{1}{m}\sum_{i=1}^mf(x_i \in M^{j'})] \|_2. \\
     &= \sup_{\|f\|_2 \leq L} \| \textbf{h}^{j} - \textbf{h}^{j'} \|_2
\end{aligned}
\end{equation}

Based on MCD, if the discrepancy between the data distributions of two adjacent sub-windows exceeds a given threshold $\sigma$,
\begin{equation}
\small
    \text{MCD}(P_t(\mathbb{S}_t^{j}) , P_t(\mathbb{S}_t^{j-1})) = \| \textbf{h}^{j} - \textbf{h}^{j-1} \|_2 > \sigma,
\end{equation}
then we determine that a concept drift happened between them:
\begin{equation}
\small
    P_t(\mathbb{S}_t^{j}) \neq P_t(\mathbb{S}_t^{j-1}).
\end{equation}

The threshold $\sigma$ can be controlled dynamically using a bootstrapping strategy, allowing it to adjust to the varying distances between sample sets within the same concept. By analyzing the historical collection of MCD values between sample sets in the same sub-windows (i.e., those with identical data distributions), we adjust the threshold $\sigma$ for each sliding window within a predefined statistical significance level (e.g., 0.05). It is worth noting that this historical MCD information can be obtained during the optimization of the encoder without necessitating additional computations.

\subsection{Optimization}
To optimize the sample set encoder, we employ contrastive learning\,\cite{he2020momentum} to enhance the separability of various concepts. The primary challenge lies in determining positive and negative sample pairs that are to be closer and further apart, respectively. This challenge is particularly pronounced in unsupervised scenarios, where we lack any prior information regarding the underlying concepts and true drifts. In \algname{}, we exploit the concept of temporal coherence\,\cite{shin2021coherence, wang2020boundary}, which is a fundamental characteristic observed in temporal data. It suggests that data points that are close in time are more likely to exhibit similar characteristics. We exploit this insight to create positive and negative samples used for optimization.

\subsubsection{\textbf{Preparing Positive Samples}}
\algname{} generates a positive sample pair by choosing two sets of data points from the same sub-window, assuming that these sets follow the same distributions.

In each sub-window $\mathbb{S}t^j$, \algname{} conducts sampling similar to \cref{eq:sample_set} to select two sets of data points, which we treat as positive sample pairs. These pairs are denoted as $M_{p1}^j$ and $M_{p2}^{j}$. To generate a diverse set of positive sample pairs, we repeat this sampling process $k$ times, resulting in $\{ M_{p1}^{j,i} \}_{i=1}^k$ and $\{ M_{p2}^{j,i} \}_{i=1}^k$. Subsequently, the sample set encoder in \cref{eq:sample_set_encoder} computes the embeddings for the $k$ positive sample pairs, yielding $\{\textbf{h}_{p1}^{j,i}\}_{i=1}^k$ and $\{\textbf{h}_{p2}^{j,i}\}_{i=1}^k$.

\subsubsection{\textbf{Preparing Negative Samples}}
\algname{} generates a negative sample pair to learn diversity in concept drift. Specifically, \algname{} selects two sets of data points respectively from the different sub-windows that are temporally distant, whose distributions are likely to differ. \algname{} also employs an efficient data augmentation technique to improve the generalization of negative pairs by introducing noise to each sampled data point:
\begin{equation}
\small
    x'=x+\delta,
\end{equation}
where the noise $\delta$ is generated from a standard probability distribution (e.g., the Gaussian distribution $G(\mu, \epsilon$)). By incorporating these two principles, \emph{temporal gap} (i.e., drift diversity) and \emph{noise augmentation} (i.e., concept diversity), we prepare two types of negative sample pairs.

\underline{Weak negative samples}: The first type of negative samples involves sampling pairs of data points from the same sub-windows but adding a small degree of noise into one of the sets. Specifically, given a sub-window $\mathbb{S}t^j$, the $k$ weak negative sample pairs are prepared: 
\begin{equation}
\small
\{ M_{wn1}^{j,i} \}_{i=1}^k \text{ and } \{ M_{wn2}^{j,i} + \delta_1 \}_{i=1}^k,
\end{equation}
where $\delta_1 \sim G(0, \epsilon_{small})$
and $G$ is a Gaussian distribution. Finally, the corresponding two set sample embeddings are derived: 
\begin{equation}
\small
\{\textbf{h}_{wn1}^{j,i}\}_{i=1}^k \text{ and } \{\textbf{h}_{wn2}^{j,i}\}_{i=1}^k.
\end{equation}

\underline{Strong negative samples}:  The second type of negative samples involves sampling pairs of data points from the two sub-windows that are temporally distant and more substantial noise is added to one of the sets. Specifically, given a sub-window $\mathbb{S}t^j$ and $\mathbb{S}t^{j'}$, the $k$ strong negative sample pairs are prepared: 
\begin{equation}
\{ M_{sn1}^{j,i} \}_{i=1}^k \text{ and } \{ M_{sn2}^{j',i} + \delta_2 \}_{i=1}^k,
\end{equation}

where $\delta_2 \sim G(0, \epsilon_{big})$. Similarly, the corresponding two set sample embeddings are derived: 
\begin{equation}
\{\textbf{h}_{sn1}^{j,i}\}_{i=1}^k \text{ and } \{\textbf{h}_{sn2}^{j',i}\}_{i=1}^k.
\end{equation}

Note that the temporal gap between two sub-windows can be adjusted within the context of a window (i.e., $1 \leq |j-j'| \leq N_{sub}-1$). By default, \algname{} adopts the largest temporal gap by setting $j = N_{sub}$ and $j' = 1$ so as to maximize the likelihood that the samples within each window exhibit distinct data distributions.

\subsubsection{\textbf{Learning Objective}}
By putting the positive, weak negative, and strong negative samples altogether, the final loss for the sample set encoder is formulated in the form of InfoNCE loss\,\cite{oord2018representation}:
\begin{equation}
\label{loss}
\small
\begin{aligned}
\mathcal{L}=\log\sum_{j=1}^{N_{sub}}\frac{
{\sum_{i=1}^k\exp(\text{MCD}_{p}^{j,k})}
}{
\sum_{i=1}^k(
\exp(\text{MCD}_{p}^{j,k})
+
\exp(\text{MCD}_{wn}^{j,k})
+
\exp(\text{MCD}_{sn}^{j,k})
)}
    \\
+\lambda \sum_{i=1}^m(||\nabla _{x_i}f(x_i)||_2-L)^2,
\end{aligned}
\end{equation}
where $\text{MCD}_{p}^k = ||\textbf{h}_{p1}^{j,k}-\textbf{h}_{p2}^{j,k}||_2$, $\text{MCD}_{wn}^k = ||\textbf{h}_{wn1}^{j,k}-\textbf{h}_{wn2}^{j,k}||_2$, and $\text{MCD}_{sn}^k = ||\textbf{h}_{sn1}^{j,k}-\textbf{h}_{sn2}^{j',k}||_2$ are MCD values between the probability distributions of two sub-windows chosen for each sample type. The last term is the gradient penalty to ensure the L-Lipschitz continuity\,\cite{eriksson2004lipschitz} of the encoding model $f(\cdot)$ where $L$ is a constant and the coefficient $\lambda$ is the regularization parameter.



\section{Theoretical Analysis} 
\subsection{Upper Bound of MCD}
Given two sample sets of data points drawn from the same distribution, we study the upper bound of MCD between the two sets, which can serve as a theoretical threshold for detecting concept drifts in two sub-windows for \algname{}.

\bgroup
\def\arraystretch{0.9}%
\begin{table*}[t!]
\centering
\caption{Description of data streams used for evaluating drift detection performance.}
\vspace{-0.3cm}
\small
\label{tbl:datasets}
\begin{tabular}{cccccc} \toprule
Stream Category & Data Set & Composition & Instances & Dimension & Drift Type(s) \\
\midrule
\multirow{4}{*}{Synthetic (Primary)} 
& GM\_Sud & Gaussian Mixture & 30,000 & 5 & Sudden \\
& GM\_Rec & Gaussian Mixture & 30,000 & 5 & Reoccurring  \\
& GM\_Grad & Gaussian Mixture & 30,000 & 5 & Gradual  \\
& GM\_Inc & Gaussian Mixture & 30,000 & 5 & Incremental  \\
\hdashline
\multirow{3}{*}{Synthetic (Complex)} 
& GamLog\_Sud & Gamma, Lognormal & 30,000 & 5 & Sudden  \\
& LogGamWei\_Sud & Lognormal, Gamma, Weibull  & 30,000 & 20 & Sudden 
\\
& GamGM\_SudGrad &Gamma,Gaussian Mixture& 30,000 & 20 & Sudden, Gradual \\
\hline
\multirow{4}{*}{Real-World} 
& INSECTS\_Sud & Mosquito sensors with varying temperatures & 52,848 & 33 & Sudden \\
& INSECTS\_Grad & Mosquito sensors with varying temperatures &  24,150 & 33 & Gradual \\
& INSECTS\_IncreRec & Mosquito sensors with varying temperatures &  79,986 & 33 & Incremental, Reoccurring \\
& EEG & EEG readings for open/closed eye states & 14,980 & 14 & Sudden, Reoccurring \\
\bottomrule
\end{tabular}
\vspace{-0.1cm}
\end{table*}
\egroup

\begin{theorem}
\label{bound1}
Assume that the sets $ \{X_i\}_{i=1}^n $ and $ \{Y_i\}_{i=1}^n $ are independently and identically distributed (i.i.d.), both drawn from the probability distribution $ p(x) $ with a mean $ \mu $ and variance $ \sigma $. If $ f $ is a Lipschitz continuous function with Lipschitz constant $ L $, we have:
\begin{equation}
\small
\label{bound}
    P(|\frac{1}{n}\sum_{i=1}^nf(X_i)-\frac{1}{n}\sum_{i=1}^nf(Y_i)|)>G(1-\frac{\alpha}{2})\sqrt{\frac{2}{n}}L\sigma)\leq\alpha,
\end{equation}
where $G(\cdot)$ is the standard Gaussian distribution function.
Therefore, for a given significance level $\alpha$, $G(1-\frac{\alpha}{2})\sqrt{\frac{2}{n}}L\sigma$ is an upper bound for the MCD $|\frac{1}{n}\sum_{i=1}^nf(X_i)-\frac{1}{n}\sum_{i=1}^nf(Y_i)|$.

\end{theorem}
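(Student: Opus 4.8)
The strategy is to turn the claim into a one-dimensional Gaussian tail bound for the average of i.i.d.\ centered variables, using the Lipschitz property of $f$ only to control their variance. First I would set $Z_i := f(X_i) - f(Y_i)$ for $i=1,\dots,n$. Assuming the $2n$ variables $\{X_i\} \cup \{Y_i\}$ are mutually independent with $X_i, Y_i \sim p$, the $Z_i$ are i.i.d.\ and, since $X_i$ and $Y_i$ have the same law, $\mathbb{E}[Z_i] = \mathbb{E}[f(X_i)] - \mathbb{E}[f(Y_i)] = 0$. The quantity in the theorem is exactly $|\bar Z_n|$ with $\bar Z_n := \frac{1}{n}\sum_{i=1}^n Z_i$, which has mean $0$ and variance $\mathrm{Var}(Z_1)/n$.

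Next I would bound $\mathrm{Var}(Z_1)$. By independence of $X_i$ and $Y_i$, $\mathrm{Var}(Z_1) = 2\,\mathrm{Var}(f(X_1))$. Using the independent-copy identity $\mathrm{Var}(f(X)) = \tfrac{1}{2}\,\mathbb{E}[(f(X)-f(X'))^2]$ with $X'$ an independent copy of $X$, the $L$-Lipschitz bound $(f(X)-f(X'))^2 \le L^2 (X-X')^2$ gives $\mathrm{Var}(f(X)) \le \tfrac{1}{2} L^2\,\mathbb{E}[(X-X')^2] = L^2\,\mathrm{Var}(X)$. Writing $\sigma$ for the dispersion of $p$, this yields $\mathrm{Var}(\bar Z_n) \le 2L^2\sigma^2/n$, i.e.\ the standard deviation of $\bar Z_n$ is at most $L\sigma\sqrt{2/n}$.

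The last step is the Gaussian approximation. By the Central Limit Theorem, $\sqrt{n}\,\bar Z_n$ is asymptotically centered Gaussian with variance at most $2L^2\sigma^2$, so $P(|\bar Z_n| > t) \le 2\bigl(1 - G(t/(L\sigma\sqrt{2/n}))\bigr)$, where $G$ is the standard normal CDF. Plugging in $t = z_{1-\alpha/2}\,L\sigma\sqrt{2/n}$, where $z_{1-\alpha/2}$ is the $(1-\alpha/2)$-quantile (the factor written $G(1-\tfrac{\alpha}{2})$ in the statement), makes the argument of $G$ equal to $z_{1-\alpha/2}$, so the right-hand side collapses to $2(1-(1-\tfrac{\alpha}{2})) = \alpha$; this is the claimed inequality, and therefore $z_{1-\alpha/2}\,L\sigma\sqrt{2/n}$ is a level-$\alpha$ upper bound for the empirical MCD of two same-distribution sample sets, which \algname{} uses as its detection threshold.

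The routine part is the variance control; its one real subtlety is aligning the dispersion assumption with the bound, since ``variance $\sigma$'' in the hypothesis only matches the $L\sigma$ in the conclusion if $\sigma$ is read as the standard deviation of $p$ (and, in the multivariate setting, as the trace of $\mathrm{Cov}(X)$, together with the vector Lipschitz inequality $\|f(x)-f(x')\|_2 \le L\|x-x'\|_2$). The genuine obstacle is rigor in the last step: the CLT makes the tail bound exact only in the limit, so for finite $n$ one must either present it as an asymptotic/approximate statement, add a Berry--Esseen correction under a third-moment assumption, or --- if $f$ is also bounded --- replace the Gaussian quantile by a Hoeffding-type sub-Gaussian tail. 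I expect the paper to take the asymptotic reading, consistent with the heuristic ``theoretical threshold'' role the bound plays.
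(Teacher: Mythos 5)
Your proposal follows essentially the same route as the paper's proof: zero mean of the difference, the variance bound $\mathrm{Var}\bigl(\tfrac{1}{n}\sum_i f(X_i)-\tfrac{1}{n}\sum_i f(Y_i)\bigr)\le 2L^2\sigma^2/n$ via the Lipschitz property, and a CLT-based Gaussian approximation with the $(1-\alpha/2)$-quantile. You additionally justify the variance step with the independent-copy identity and flag the asymptotic nature of the bound, both of which the paper leaves implicit, but the argument is the same.
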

\begin{proof}
    By the Central Limit Theorem, $\frac{1}{n}\sum_{i=1}^nf(X_i)-\frac{1}{n}\sum_{i=1}^nf(Y_i) $ converges to the Guassain distribution.
    Moreover, we have:
    \begin{equation}
    \small
        E(\frac{1}{n}\sum_{i=1}^nf(X_i)-\frac{1}{n}\sum_{i=1}^nf(Y_i))=0.
    \end{equation}
    Since $ f $ is a $L$-Lipschitz continuous function, we have:
    \begin{equation}
    \small
        Var(\frac{1}{n}\sum_{i=1}^nf(X_i)-\frac{1}{n}\sum_{i=1}^nf(Y_i)) \leq \frac{2L^2\sigma^2}{n}.
    \end{equation}
When we approximate the distribution of 
    $\frac{1}{n}\sum_{i=1}^nf(X_i)-\frac{1}{n}\sum_{i=1}^nf(Y_i) $ as the Gaussian distribution, we have:
    \begin{equation}
    \small
        P(|\frac{1}{n}\sum_{i=1}^nf(X_i)-\frac{1}{n}\sum_{i=1}^nf(Y_i)|>G(1-\frac{\alpha}{2})\sqrt{\frac{2}{n}}L\sigma)\leq\alpha.
    \end{equation}
    
\end{proof}


Therefore, for the null hypothesis $H_0$: $ \{X_i\}_{i=1}^n $ and $ \{Y_i\}_{i=1}^n $ are drawn from the same probability distribution, given a significance level $\alpha$, $G(1-\frac{\alpha}{2})\sqrt{\frac{2}{n}}L\sigma$ can serve as the threshold for rejecting \(H_0\).
In the scenario where $ \{X_i\}_{i=1}^n $ and $ \{Y_i\}_{i=1}^n $ are multivariate random variables, hypothesis testing can similarly be conducted using the chi-squared distribution.

This theoretical bound can serve as a guide to set the threshold in a hypothesis testing framework. However, deriving the exact rejection threshold analytically may not always be feasible. As suggested in Section \ref{sec:drift_detector}, the empirical threshold for rejecting the null hypothesis can be used by estimating statistics of historical MCD values meeting the hypothesis with a pre-defined significance.

\subsection{Complexity of \algname{}}
We analyze the time complexity of \algname{} mainly for sampling, training, and inference. Recall that we use a sliding window with $N_{sub}$ sub-windows, $k$ sets of $m$ samples, and an encoder with the parameter size $p$ and training epochs $e$. Since we sample in each sub-window, the time complexity for constructing positive and negative samples is $\mathcal{O}(mkN_{sub})$. The time complexity for training the encoder is $\mathcal{O}(mkep)$.
For inference, since we need to calculate the differences of sample sets in each sub-window sequentially, the time complexity is $O(mkN_{sub}^2)$. Finally, the total complexity is $O(mk(N_{sub}^2+ep))$. Since typically $p \gg e,m,k,N_{sub}$, the time complexity of \algname{} is mostly controlled by the encoder complexity.

\section{Experiments}
We conducted thorough experiments to evaluate the performance of \algname{} on 7 synthetic data sets and 4 real-world data sets. The results are briefly summarized as follows.
\begin{itemize}[leftmargin=10pt, noitemsep]
\item \algname{} outperforms existing baselines in detecting concept drifts in terms of Precision, F1, and MCC scores and shows high interpretablity with varying drift types (Section \ref{sec:overall_performance}).
\item Through ablation analysis, the three sampling strategies introduced in contrastive learning for MCD are demonstrated to be effective (Section \ref{sec:ablation}).
\item The hyperparameters for \algname{} are reasonably set by default and robust to the performance in most cases (Section \ref{sec:sensitivity_analysis}).
\item Visualizations of concept embeddings (Section \ref{sec:qualitative}) and threshold changes (Section \ref{sec:threshold}) show the  empirical efficacy of \algname{}.
\end{itemize}

\bgroup
\def\arraystretch{0.8}%
\begin{table*}[htbp]
\centering
\small
\caption{Overall performance comparison (the best and second best results are in bold and underlined, respectively).}
\vspace{-0.3cm}
\label{tbl:overall_performance}
\begin{tabular}[c]{@{}C{2.0cm}|C{0.65cm}C{0.65cm}C{0.65cm}|C{0.65cm}C{0.65cm}C{0.65cm}|C{0.65cm}C{0.65cm}C{0.65cm}|C{0.65cm}C{0.65cm}C{0.65cm}|C{0.65cm}C{0.65cm}C{0.65cm}@{}}
\toprule
\multicolumn{1}{c}{}& \multicolumn{3}{c}{\textbf{\algname{}}} & \multicolumn{3}{c}{\textbf{KS}} & \multicolumn{3}{c}{\textbf{MMD-GK}} & \multicolumn{3}{c}{\textbf{LSDD}} & \multicolumn{3}{c}{\textbf{MMD-DK}} \\ 
\multicolumn{1}{c}{\textit{Data set}} & \multicolumn{1}{c}{\textit{Pre.}} & \multicolumn{1}{c}{\textit{F1}} & \multicolumn{1}{c}{\textit{MCC}} & \multicolumn{1}{c}{\textit{Pre.}} & \multicolumn{1}{c}{\textit{F1}} & \multicolumn{1}{c}{\textit{MCC}} & \multicolumn{1}{c}{\textit{Pre.}} & \multicolumn{1}{c}{\textit{F1}} & \multicolumn{1}{c}{\textit{MCC}} & \multicolumn{1}{c}{\textit{Pre.}} & \multicolumn{1}{c}{\textit{F1}} & \multicolumn{1}{c}{\textit{MCC}} & \multicolumn{1}{c}{\textit{Pre.}} & \multicolumn{1}{c}{\textit{F1}}& \multicolumn{1}{c}{\textit{MCC}}\\ \toprule
 
\begin{tabular}[c]{@{}c@{}}GM\_Sud\end{tabular} & \begin{tabular}[c]{@{}c@{}}\textbf{1.00}\\ ($\pm$0.00)\end{tabular} & \begin{tabular}[c]{@{}c@{}}\textbf{1.00}\\ ($\pm$0.00)\end{tabular} & \begin{tabular}[c]{@{}c@{}}\textbf{1.00}\\ ($\pm$0.00)\end{tabular} & \begin{tabular}[c]{@{}c@{}}0.25\\ ($\pm$0.00)\end{tabular}& \begin{tabular}[c]{@{}c@{}}0.40\\ ($\pm$0.00)\end{tabular}& \begin{tabular}[c]{@{}c@{}}0.49\\ ($\pm$0.00)\end{tabular} & \begin{tabular}[c]{@{}c@{}}\underline{0.32}\\ ($\pm$0.07)\end{tabular} &\begin{tabular}[c]{@{}c@{}}\underline{0.48}\\ ($\pm$0.08)\end{tabular}& \begin{tabular}[c]{@{}c@{}}\underline{0.56}\\ ($\pm$0.06)\end{tabular}&
\begin{tabular}[c]{@{}c@{}}0.17\\ ($\pm$0.04)\end{tabular} &\begin{tabular}[c]{@{}c@{}}0.30\\ ($\pm$0.05)\end{tabular}& \begin{tabular}[c]{@{}c@{}}0.40\\ ($\pm$0.05)\end{tabular}&
\begin{tabular}[c]{@{}c@{}}0.25\\($\pm$0.12)\end{tabular}& \begin{tabular}[c]{@{}c@{}}0.38\\ ($\pm$0.15)\end{tabular}& \begin{tabular}[c]{@{}c@{}}0.47\\ ($\pm$0.12)\end{tabular}\\
\hdashline
GM\_Rec & 
\begin{tabular}[c]{@{}c@{}}\textbf{1.00}\\ ($\pm$0.00)\end{tabular} & \begin{tabular}[c]{@{}c@{}}\textbf{0.79}\\ ($\pm$0.11)\end{tabular} & \begin{tabular}[c]{@{}c@{}}\textbf{0.81}\\ ($\pm$0.10)\end{tabular} & \begin{tabular}[c]{@{}c@{}}0.40\\ ($\pm$0.00)\end{tabular} & \begin{tabular}[c]{@{}c@{}}0.50\\ ($\pm$0.00)\end{tabular} & \begin{tabular}[c]{@{}c@{}}0.50\\ ($\pm$0.00)\end{tabular} & \begin{tabular}[c]{@{}c@{}}\underline{0.64}\\ ($\pm$0.08)\end{tabular} & \begin{tabular}[c]{@{}c@{}}\underline{0.78}\\ ($\pm$0.06)\end{tabular}& \begin{tabular}[c]{@{}c@{}}\underline{0.79}\\ ($\pm$0.05)\end{tabular}& \begin{tabular}[c]{@{}c@{}}0.60\\ ($\pm$0.16)\end{tabular}& \begin{tabular}[c]{@{}c@{}}0.74\\ ($\pm$0.12)\end{tabular}& \begin{tabular}[c]{@{}c@{}}0.76\\ ($\pm$0.10)\end{tabular}& \begin{tabular}[c]{@{}c@{}}0.37\\ ($\pm$0.11)\end{tabular}& \begin{tabular}[c]{@{}c@{}}0.51\\ ($\pm$0.12)\end{tabular}& \begin{tabular}[c]{@{}c@{}}0.54\\ ($\pm$0.13)\end{tabular}\\[2pt]
\hdashline
GM\_Grad & 
\begin{tabular}[c]{@{}c@{}}\textbf{1.00}\\ ($\pm$0.00)\end{tabular} & \begin{tabular}[c]{@{}c@{}}\underline{0.79}\\ ($\pm$0.06)\end{tabular} & \begin{tabular}[c]{@{}c@{}}\underline{0.79}\\ ($\pm$0.06)\end{tabular} & \begin{tabular}[c]{@{}c@{}}0.78\\ ($\pm$0.00)\end{tabular} & \begin{tabular}[c]{@{}c@{}}0.78\\ ($\pm$0.00)\end{tabular}& \begin{tabular}[c]{@{}c@{}}0.76\\ ($\pm$0.00)\end{tabular}& \begin{tabular}[c]{@{}c@{}}\underline{0.80}\\ ($\pm$0.05)\end{tabular}& \begin{tabular}[c]{@{}c@{}}\textbf{0.89}\\ ($\pm$0.03)\end{tabular}& \begin{tabular}[c]{@{}c@{}}\textbf{0.88}\\ ($\pm$0.03)\end{tabular}& \begin{tabular}[c]{@{}c@{}}0.79\\ ($\pm$0.06)\end{tabular}& \begin{tabular}[c]{@{}c@{}}0.78\\ ($\pm$0.03)\end{tabular}& \begin{tabular}[c]{@{}c@{}}0.77\\ ($\pm$0.03)\end{tabular}& \begin{tabular}[c]{@{}c@{}}0.70\\ ($\pm$0.11)\end{tabular}& \begin{tabular}[c]{@{}c@{}}\underline{0.79}\\ ($\pm$0.08)\end{tabular}& \begin{tabular}[c]{@{}c@{}}0.78\\ ($\pm$0.09)\end{tabular}\\[2pt]
\hdashline
GM\_Inc & 
\begin{tabular}[c]{@{}c@{}}\textbf{0.99}\\ ($\pm$0.04)\end{tabular} & \begin{tabular}[c]{@{}c@{}}0.44\\ ($\pm$0.09)\end{tabular} & \begin{tabular}[c]{@{}c@{}}0.51\\ ($\pm$0.07)\end{tabular} & \begin{tabular}[c]{@{}c@{}}0.38\\ ($\pm$0.00)\end{tabular}&
\begin{tabular}[c]{@{}c@{}}0.33\\ ($\pm$0.00)\end{tabular}& \begin{tabular}[c]{@{}c@{}}0.27\\ ($\pm$0.00)\end{tabular}& \begin{tabular}[c]{@{}c@{}}\underline{0.64}\\ ($\pm$0.05)\end{tabular}& \begin{tabular}[c]{@{}c@{}}\textbf{0.71}\\ ($\pm$0.03)\end{tabular}& \begin{tabular}[c]{@{}c@{}}\textbf{0.67}\\ ($\pm$0.04)\end{tabular}& \begin{tabular}[c]{@{}c@{}}0.57\\ ($\pm$0.06)\end{tabular}& \begin{tabular}[c]{@{}c@{}}\underline{0.65}\\ ($\pm$0.05)\end{tabular}& \begin{tabular}[c]{@{}c@{}}\underline{0.61}\\ ($\pm$0.06)\end{tabular}& \begin{tabular}[c]{@{}c@{}}0.44\\ ($\pm$0.18)\end{tabular}& \begin{tabular}[c]{@{}c@{}}0.37\\ ($\pm$0.13)\end{tabular}& \begin{tabular}[c]{@{}c@{}}0.31\\ ($\pm$0.15)\end{tabular}\\[2pt]
\hline
GamLog\_Sud & 
\begin{tabular}[c]{@{}c@{}}\textbf{1.00}\\ ($\pm$0.00)\end{tabular} & \begin{tabular}[c]{@{}c@{}}\textbf{1.00}\\ ($\pm$0.00)\end{tabular} & \begin{tabular}[c]{@{}c@{}}\textbf{1.00}\\ ($\pm$0.00)\end{tabular} & \begin{tabular}[c]{@{}c@{}}0.13\\ ($\pm$0.00)\end{tabular}& \begin{tabular}[c]{@{}c@{}}0.22\\ ($\pm$0.00)\end{tabular}& \begin{tabular}[c]{@{}c@{}}0.34\\ ($\pm$0.00)\end{tabular}& \begin{tabular}[c]{@{}c@{}}0.11\\ ($\pm$0.01)\end{tabular}& \begin{tabular}[c]{@{}c@{}}0.20\\ ($\pm$0.02)\end{tabular}& \begin{tabular}[c]{@{}c@{}}0.31\\ ($\pm$0.02)\end{tabular}& \begin{tabular}[c]{@{}c@{}}0.11\\ ($\pm$0.02)\end{tabular}& \begin{tabular}[c]{@{}c@{}}0.20\\ ($\pm$0.03)\end{tabular}& \begin{tabular}[c]{@{}c@{}}0.32\\ ($\pm$0.03)\end{tabular}& \begin{tabular}[c]{@{}c@{}}\underline{0.18}\\ ($\pm$0.06)\end{tabular}& \begin{tabular}[c]{@{}c@{}}\underline{0.30}\\ ($\pm$0.08)\end{tabular}& \begin{tabular}[c]{@{}c@{}}\underline{0.40}\\ ($\pm$0.07)\end{tabular}\\
\hdashline
LogGamWei\_Sud & 
\begin{tabular}[c]{@{}c@{}}\textbf{0.98}\\ ($\pm$0.07)\end{tabular} & \begin{tabular}[c]{@{}c@{}}\textbf{0.94}\\ ($\pm$0.12)\end{tabular} & \begin{tabular}[c]{@{}c@{}}\textbf{0.95}\\ ($\pm$0.11)\end{tabular} & 
\begin{tabular}[c]{@{}c@{}}\underline{0.40}\\ ($\pm$0.00)\end{tabular}& \begin{tabular}[c]{@{}c@{}}\underline{0.57}\\ ($\pm$0.00)\end{tabular}& \begin{tabular}[c]{@{}c@{}}\underline{0.62}\\ ($\pm$0.00)\end{tabular}& \begin{tabular}[c]{@{}c@{}}0.31\\ ($\pm$0.06)\end{tabular}& \begin{tabular}[c]{@{}c@{}}0.46\\ ($\pm$0.06)\end{tabular}& \begin{tabular}[c]{@{}c@{}}0.54\\ ($\pm$0.05)\end{tabular}& \begin{tabular}[c]{@{}c@{}}0.26\\ ($\pm$0.10)\end{tabular}& \begin{tabular}[c]{@{}c@{}}0.41\\ ($\pm$0.10)\end{tabular}& \begin{tabular}[c]{@{}c@{}}0.49\\ ($\pm$0.09)\end{tabular}& \begin{tabular}[c]{@{}c@{}}0.29\\ ($\pm$0.07)\end{tabular}& \begin{tabular}[c]{@{}c@{}}0.44\\ ($\pm$0.08)\end{tabular}& \begin{tabular}[c]{@{}c@{}}0.52\\ ($\pm$0.06)\end{tabular}\\
\hdashline
GamGM\_SudGrad & 
\begin{tabular}[c]{@{}c@{}}\textbf{0.98}\\ ($\pm$0.04)\end{tabular} & \begin{tabular}[c]{@{}c@{}}\textbf{0.99}\\ ($\pm$0.02)\end{tabular} & \begin{tabular}[c]{@{}c@{}}\textbf{0.99}\\ ($\pm$0.02)\end{tabular} & \begin{tabular}[c]{@{}c@{}}\underline{0.88}\\ ($\pm$0.00)\end{tabular}& \begin{tabular}[c]{@{}c@{}}\underline{0.93}\\ ($\pm$0.00)\end{tabular}& \begin{tabular}[c]{@{}c@{}}\underline{0.93}\\ ($\pm$0.00)\end{tabular}& \begin{tabular}[c]{@{}c@{}}0.66\\ ($\pm$0.05)\end{tabular}& \begin{tabular}[c]{@{}c@{}}0.79\\ ($\pm$0.04)\end{tabular}& \begin{tabular}[c]{@{}c@{}}0.79\\ ($\pm$0.03)\end{tabular}& \begin{tabular}[c]{@{}c@{}}0.57\\ ($\pm$0.07)\end{tabular}& \begin{tabular}[c]{@{}c@{}}0.72\\ ($\pm$0.06)\end{tabular}& \begin{tabular}[c]{@{}c@{}}0.73\\ ($\pm$0.05)\end{tabular}& \begin{tabular}[c]{@{}c@{}}0.61\\ ($\pm$0.11)\end{tabular}& \begin{tabular}[c]{@{}c@{}}0.75\\ ($\pm$0.09)\end{tabular}& \begin{tabular}[c]{@{}c@{}}0.76\\ ($\pm$0.08)\end{tabular}\\
\hline
INSECTS\_Sud & 
\begin{tabular}[c]{@{}c@{}}\textbf{0.55}\\ ($\pm$0.06)\end{tabular} &
\begin{tabular}[c]{@{}c@{}}\textbf{0.59}\\ ($\pm$0.07)\end{tabular} &
\begin{tabular}[c]{@{}c@{}}\textbf{0.55}\\ ($\pm$0.08)\end{tabular} &
\begin{tabular}[c]{@{}c@{}}\underline{0.38}\\ ($\pm$0.00)\end{tabular} &
\begin{tabular}[c]{@{}c@{}}\underline{0.53}\\ ($\pm$0.00)\end{tabular} &
\begin{tabular}[c]{@{}c@{}}\underline{0.53}\\ ($\pm$0.00)\end{tabular}& \begin{tabular}[c]{@{}c@{}}\underline{0.38}\\ ($\pm$0.02)\end{tabular} &
\begin{tabular}[c]{@{}c@{}}\underline{0.53}\\ ($\pm$0.02)\end{tabular} &
\begin{tabular}[c]{@{}c@{}}\underline{0.53}\\ ($\pm$0.02)\end{tabular} & \begin{tabular}[c]{@{}c@{}}0.33\\ ($\pm$0.02)\end{tabular} &
\begin{tabular}[c]{@{}c@{}}0.49\\ ($\pm$0.02)\end{tabular} &
\begin{tabular}[c]{@{}c@{}}0.48\\ ($\pm$0.02)\end{tabular} & \begin{tabular}[c]{@{}c@{}}\underline{0.38}\\ ($\pm$0.06)\end{tabular} &
\begin{tabular}[c]{@{}c@{}}0.51\\ ($\pm$0.07)\end{tabular} &
\begin{tabular}[c]{@{}c@{}}0.48\\ ($\pm$0.08)\end{tabular} \\
\hdashline
INSECTS\_Grad & 
\begin{tabular}[c]{@{}c@{}}\textbf{0.18}\\ ($\pm$0.15)\end{tabular} &
\begin{tabular}[c]{@{}c@{}}\textbf{0.28}\\ ($\pm$0.23)\end{tabular} &
\begin{tabular}[c]{@{}c@{}}\textbf{0.32}\\ ($\pm$0.28)\end{tabular} &
\begin{tabular}[c]{@{}c@{}}\underline{0.06}\\ ($\pm$0.00)\end{tabular} &
\begin{tabular}[c]{@{}c@{}}0.11\\ ($\pm$0.00)\end{tabular} &
\begin{tabular}[c]{@{}c@{}}0.22\\ ($\pm$0.00)\end{tabular}& \begin{tabular}[c]{@{}c@{}}\underline{0.06}\\ ($\pm$0.00)\end{tabular} &
\begin{tabular}[c]{@{}c@{}}\underline{0.12}\\ ($\pm$0.01)\end{tabular} &
\begin{tabular}[c]{@{}c@{}}\underline{0.23}\\ ($\pm$0.01)\end{tabular} & \begin{tabular}[c]{@{}c@{}}0.05\\ ($\pm$0.01)\end{tabular} &
\begin{tabular}[c]{@{}c@{}}0.10\\ ($\pm$0.01)\end{tabular} &
\begin{tabular}[c]{@{}c@{}}0.21\\ ($\pm$0.01)\end{tabular} & \begin{tabular}[c]{@{}c@{}}\underline{0.06}\\ ($\pm$0.04)\end{tabular} &
\begin{tabular}[c]{@{}c@{}}0.11\\ ($\pm$0.08)\end{tabular} &
\begin{tabular}[c]{@{}c@{}}0.18\\ ($\pm$0.14)\end{tabular} \\
\hdashline
INSECTS\_IncreRec & 
\begin{tabular}[c]{@{}c@{}}\textbf{0.26}\\ ($\pm$0.05)\end{tabular} &
\begin{tabular}[c]{@{}c@{}}\textbf{0.37}\\ ($\pm$0.07)\end{tabular} &
\begin{tabular}[c]{@{}c@{}}\textbf{0.40}\\ ($\pm$0.07)\end{tabular} &
\begin{tabular}[c]{@{}c@{}}0.08\\ ($\pm$0.00)\end{tabular} &
\begin{tabular}[c]{@{}c@{}}0.15\\ ($\pm$0.00)\end{tabular} &
\begin{tabular}[c]{@{}c@{}}0.24\\ ($\pm$0.00)\end{tabular}& \begin{tabular}[c]{@{}c@{}}0.08\\ ($\pm$0.00)\end{tabular} &
\begin{tabular}[c]{@{}c@{}}0.15\\ ($\pm$0.00)\end{tabular} &
\begin{tabular}[c]{@{}c@{}}0.24\\ ($\pm$0.00)\end{tabular} & \begin{tabular}[c]{@{}c@{}}0.08\\ ($\pm$0.00)\end{tabular} &
\begin{tabular}[c]{@{}c@{}}0.14\\ ($\pm$0.00)\end{tabular} &
\begin{tabular}[c]{@{}c@{}}0.23\\ ($\pm$0.00)\end{tabular} & \begin{tabular}[c]{@{}c@{}}\underline{0.10}\\ ($\pm$0.01)\end{tabular} &
\begin{tabular}[c]{@{}c@{}}\underline{0.18}\\ ($\pm$0.02)\end{tabular} &
\begin{tabular}[c]{@{}c@{}}\underline{0.25}\\ ($\pm$0.04)\end{tabular} \\
\hdashline
EEG & 
\begin{tabular}[c]{@{}c@{}}0.43\\ ($\pm$0.14)\end{tabular} & \begin{tabular}[c]{@{}c@{}}0.23\\ ($\pm$0.10)\end{tabular} & \begin{tabular}[c]{@{}c@{}}\textbf{0.12}\\ ($\pm$0.09)\end{tabular} & \begin{tabular}[c]{@{}c@{}}0.25\\ ($\pm$0.00)\end{tabular} & \begin{tabular}[c]{@{}c@{}}0.40\\ ($\pm$0.00)\end{tabular} & \begin{tabular}[c]{@{}c@{}}-0.05\\ ($\pm$0.00)\end{tabular} & \begin{tabular}[c]{@{}c@{}}\underline{0.47}\\ ($\pm$0.00)\end{tabular} & \begin{tabular}[c]{@{}c@{}}\underline{0.63}\\ ($\pm$0.00)\end{tabular} & \begin{tabular}[c]{@{}c@{}}-0.11\\ ($\pm$0.00)\end{tabular} & \begin{tabular}[c]{@{}c@{}}0.25\\ ($\pm$0.00)\end{tabular} & \begin{tabular}[c]{@{}c@{}}0.40\\ ($\pm$0.00)\end{tabular} & \begin{tabular}[c]{@{}c@{}}-0.00\\ ($\pm$0.00)\end{tabular} & \begin{tabular}[c]{@{}c@{}}\textbf{0.48}\\ ($\pm$0.00)\end{tabular} & \begin{tabular}[c]{@{}c@{}}\textbf{0.64}\\ ($\pm$0.00)\end{tabular} & \begin{tabular}[c]{@{}c@{}}\underline{0.03}\\ ($\pm$0.04)\end{tabular} \\
\bottomrule
\end{tabular}
\end{table*}
\egroup
\begin{figure*}[!t]
    \centering
    \begin{subfigure}[b]{0.49\textwidth}
        \centering
        \includegraphics[width=\textwidth]{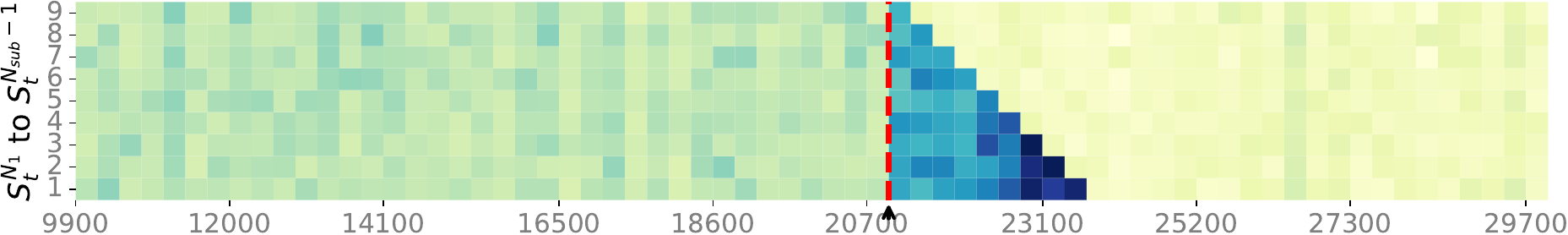}
        \vspace{-0.5cm}
        \caption{GM\_Sud.}
        \label{fig:hm_sud}
    \end{subfigure}
    \begin{subfigure}[b]{0.49\textwidth}
        \centering
        \includegraphics[width=\textwidth]{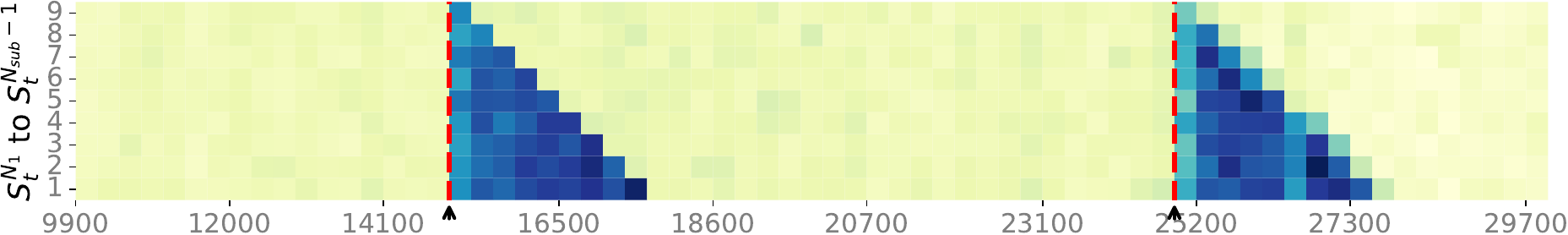}
        \vspace{-0.5cm}
        \caption{GM\_Rec.}
        \label{fig:hm_rec}
    \end{subfigure}
    \begin{subfigure}[b]{0.49\textwidth}
        \centering
        \includegraphics[width=\textwidth]{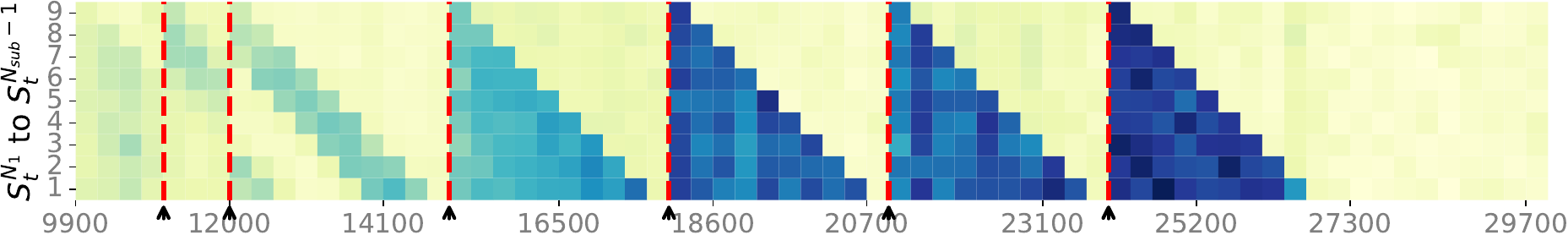}
        \vspace{-0.5cm}
        \caption{GM\_Grad.}
        \label{fig:hm_grad}
    \end{subfigure}
    \begin{subfigure}[b]{0.49\textwidth}
        \centering
        \includegraphics[width=\textwidth]{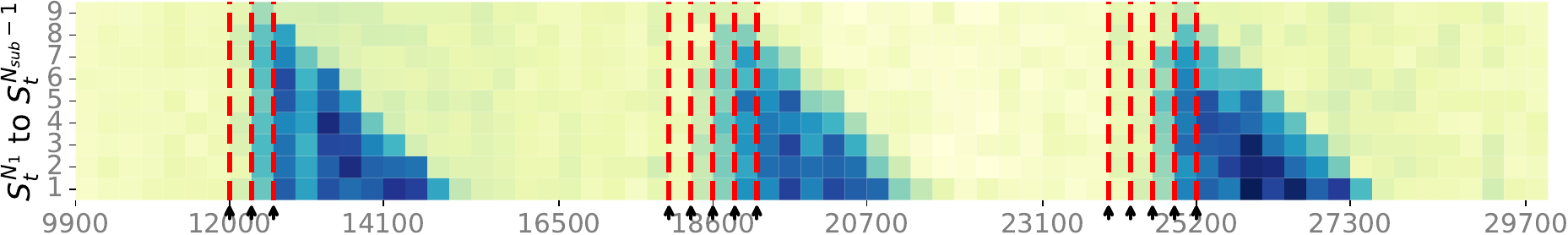}
        \vspace{-0.5cm}
        \caption{GM\_Inc.}
        \label{fig:hm_incre}
    \end{subfigure}
    \vspace{-0.4cm}
    \caption{Heatmaps of MCD between sub-windows for primary synthetic data sets with drift indicators (red lines and arrows).}
    \label{fig:hm}
    \vspace*{-0.3cm}
\end{figure*}

\subsection{Experiment Setting}
\subsubsection{\textbf{Data Sets}} \label{data sets}As summarized in Table \ref{tbl:datasets}, we used both \emph{synthetic} and \emph{real-world} data sets, each with known drift points.

The synthetic data sets are further divided into \emph{primary drift tasks} and \emph{complex drift tasks}, based on the complexity of drift detection. The primary drift tasks involve drifts created by altering the weights in Gaussian mixture models, leading to distinct distribution changes in low-dimensional data streams. Conversely, complex drift tasks include more challenging, higher-overlap distributions like Gamma, Lognormal, and Weibull, and consider multiple drift occurrences in higher-dimensional streams. A more comprehensive data generation process is provided in Appendix\,\ref{apdx:sdatasets}.

Concept drift in real-world data sets tends to occur with more arbitrary and complex causes and thus presents greater predictive challenges. We selected real data sets known for their drift locations and types. \textit{INSECTS}\,\cite{SouzaChallenges:2020} introduces data sets with multiple types of drift through varying environmental temperatures to collect sensor reading values monitoring mosquitos. \textit{EEG}\,\cite{misc_eeg_eye_state_264} is a collection of EEG measurements of eye states recorded via camera, where open and closed eye states represent two different EEG distributions causing concept drift. Both data sets are widely used in relevant work considering concept drifts in real-world scenarios\,\cite{Yoon_2022,10.1609/aaai.v33i01.33014594}. Further details on the real-world data sets can be found in Appendix\,\ref{apdx:rdatasets}.

For all data sets, a sliding window is employed from the beginning of each data set to simulate data streams. The sub-windows where the true drift initiates or is present are assigned a label of "Drift", while others are designated as "No Drift", formulating drift detection to a binary classification to facilitate evaluation.

\subsubsection{\textbf{Compared Algorithms}} 
We chose popular drift detection algorithms that can be adopted for \emph{unsupervised} and \emph{online} concept drift detection; (1) \emph{Kolmogorov-Smirnov Test (KS Test)}\,\cite{rabanser2019failing}: assessing whether one-dimensional distributions differ by calculating the supremum of the differences between their empirical distributions. For the multi-dimensional data, p-values were aggregated using the Bonferroni correction. (2) \emph{Maximum Mean Discrepancy with a Gaussian Kernel (MMD-GK)}\,\cite{JMLR:v13:gretton12a}: a kernel-based method for multi-dimensional two-sample testing that quantifies the disparity between two distributions by measuring their mean embeddings within the RKHS. In our experiments, we employ a Gaussian kernel to compute an unbiased estimate of the disparity. (3) \emph{Least-Squares Density Difference (LSDD)}\,\cite{7745962}: employing a linear-in-parameters Gaussian kernel function to estimate the distance between the probability density functions of two samples.  (4) \emph{Maximum Mean Discrepancy with a Deep Kernel (MMD-DK)}\,\cite{liu2020learning}: enhancing the traditional MMD approach by incorporating a deep kernel, where the kernel function is optimized using a subset of the data to maximize the test power. The detailed implementation and evaluation setting for the compared algorithms are provided in Appendix \ref{apdx:implementation_detail}

\subsubsection{\textbf{Evaluation Metrics}}

We used Precision, F1-score, and the Matthews Correlation Coefficient (MCC)\,\cite{chicco2020advantages} as metrics to gauge performance. Precision evaluates the proportion of true drifts correctly identified among all detected drifts, serving as a critical metric for assessing the effectiveness of a detector in avoiding false positives. The F1-score, a harmonic mean of precision and recall, serves as a balanced measure of a detector's effectiveness in identifying true drifts while considering both false positives and false negatives. The MCC\cite{chicco2020advantages}, which encompasses all quadrants of the confusion matrix, is particularly reliable in scenarios involving rare but critical events, notably in cases of concept drift. We conducted 20 runs for each experiment and reported the average with standard deviations.

\subsection{Drift Detection Accuracy} 
\label{sec:overall_performance}

\subsubsection{\textbf{Synthetic Data Sets}} 
As shown in Table \ref{tbl:overall_performance}, \algname{} achieved the highest precision across all simulated data sets, with the scores being almost all 1 or very close to 1, demonstrating its \textit{Eagle Eye} capability in drift point detection. Moreover, except for data sets with incremental drift, our method outperformed others in terms of F1-score and MCC as well.

\underline{Visualization of MCD in each window}: In Figure \ref{fig:hm}, We further analyzed the concept drift capability of \algname{}.
We calculated MCD values between the most recent data distributions in each new sub-window ($S_t^{N_{sub}}$) and all preceding data distributions within the same window ($S_t^{N_{1}}$ to $S_t^{N_{sub}-1}$). The horizontal axis represents the locations of $S_t^{N_{sub}}$ at each time step $t$, and the vertical axis encompasses preceding sub-windows in the same window. Then, each cell indicates the learned MCD between sub-window pairs. For various types of concept drift, the heatmap patterns exhibit distinctive shapes: sudden drift manifests as lower triangular patterns starting at the drift point, reoccurring drift as two lower triangles, gradual drift progressively forms lower triangular patterns from top to bottom, and incremental drift results in fainter lower triangles due to more subtle distributional changes. We further investigated the model's performance with even more subtle and slower incremental drifts and real-world data sets in Appendix \ref{apdx:additional_results}. 
These demonstrations highlight the interpretive strength of \algname{} in capturing the dynamics of drift occurrences.

\begin{figure}[t]
  \centering
  \includegraphics[width=\linewidth]{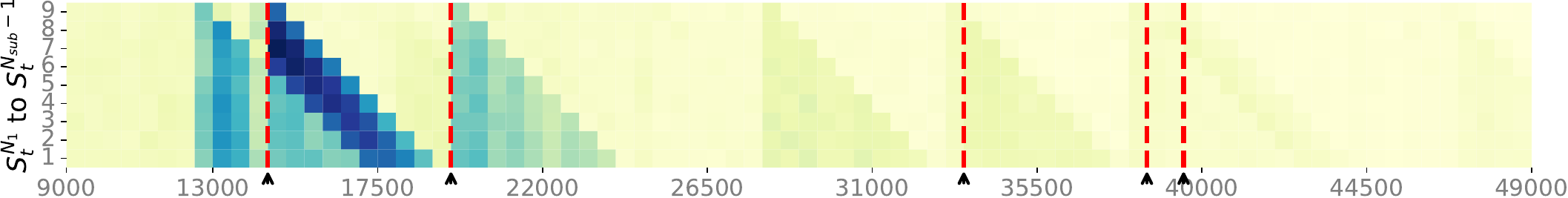}
  \vspace{-10pt}
    \vspace{-0.3cm}
  \caption{Heatmap for INSECTS\_Sud with drift indicators.}
  \label{fig:heatmap_insects_sud}
\vspace*{-0.4cm}
\end{figure}

\subsubsection{\textbf{Real-world Data Sets}}
As shown in Table \ref{tbl:overall_performance}, \algname{} demonstrated superior performances over other baseline algorithms across a variety of drift scenarios within the \textit{INSECTS}, including INSECTS\_Sud (sudden drift), INSECTS\_Grad (gradual drift), and INSECTS\_IncreRec (incremental drift and reoccurring). Figure \ref{fig:heatmap_insects_sud} shows the heatmap of MCD for INSECTS\_Sud. These results highlight \algname{}'s robust capability in effectively detecting and adapting to different types of drift phenomena, ranging from abrupt changes to slow evolutions and cyclic variations. 
Despite exhibiting slightly weaker performance than MMD-DK when applied to the \textit{EEG}, known for its high frequency of sudden drifts, it is noteworthy that \algname{} still consistently achieved the highest MCC value. 

\begin{figure}[t]
    \centering
    \includegraphics[width=\columnwidth]{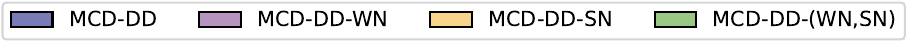}
    \includegraphics[width=0.8\columnwidth]{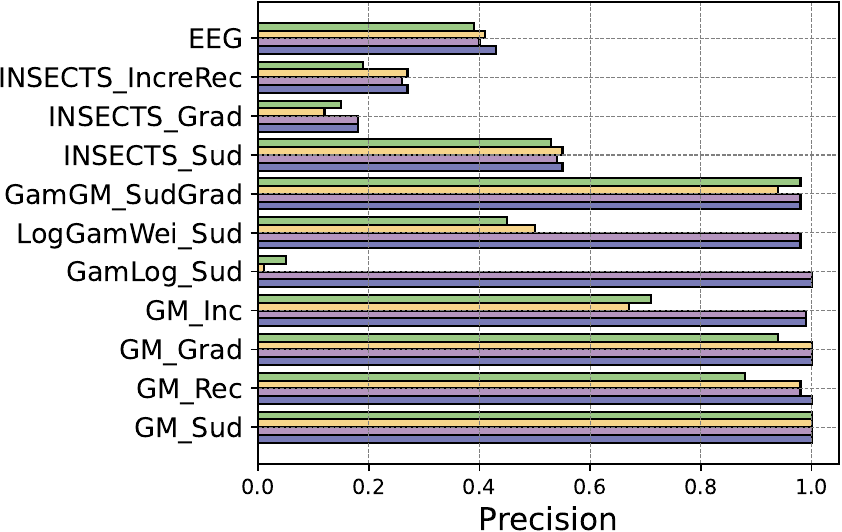}
    \vspace{-0.3cm}
    \caption{Ablation study of contrastive learning strategies.}
    \label{fig:ablation_pre}
    \vspace*{-0.4cm}
\end{figure}

\subsection{Ablation Study}
\label{sec:ablation}
We conducted ablation studies on the strategies for obtaining positive and negative sample pairs. Specifically, we evaluated the performance of \algname{} under various configurations: the complete \algname{} setup, \algname{} without weak negative sample pairs (i.e., \algname{}-WN), \algname{} without strong negative sample pairs (i.e., \algname{}-SN), and \algname{} utilizing only positive sample pairs, eliminating all negative samples (i.e., \algname{}-(WN,SN)).

\begin{figure}[!t]
    \centering
    \includegraphics[width=0.4\textwidth]{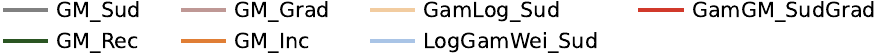}
    \begin{subfigure}[b]{0.22\textwidth}
        \centering
        \includegraphics[width=\textwidth]{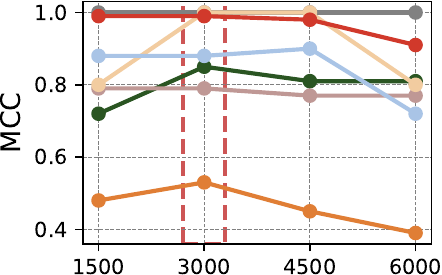}
        \caption{Window size $W$.}
        \vspace{0.15cm}
        \label{fig:sensitive_winsize}
    \end{subfigure}
    \begin{subfigure}[b]{0.22\textwidth}
        \centering
        \includegraphics[width=\textwidth]{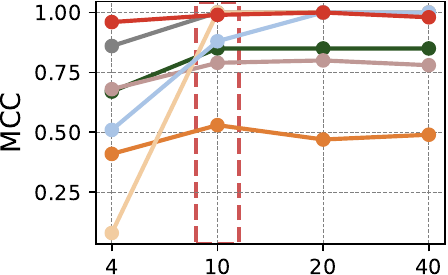}
        \caption{Number of sampling $k$.}
        \vspace{0.15cm}
        \label{fig:sensitive_k}
    \end{subfigure}
    \begin{subfigure}[b]{0.22\textwidth}
        \centering
        \includegraphics[width=\textwidth]{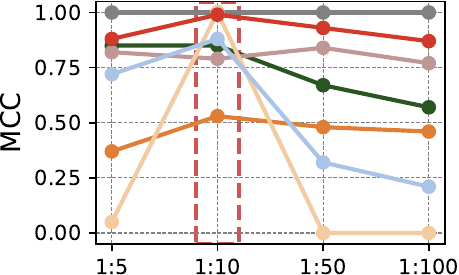}
        \caption{Ratio of noise $\epsilon_{\text{small}}:\epsilon_{\text{big}}$.}
        \vspace{0.15cm}
        \label{fig:sensitive_eps}
    \end{subfigure} 
    \begin{subfigure}[b]{0.22\textwidth}
        \centering
        \includegraphics[width=\textwidth]{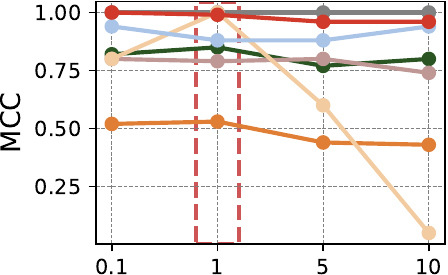}
        \caption{Regularization coefficient $\lambda$.}
        \vspace{0.15cm}
        \label{fig:sensitive_lamb}
    \end{subfigure}
    \vspace{-0.4cm}
    \caption{Sensitivity analysis results (default in red box).}
    \label{fig:sensitive}
    \vspace{-0.2cm}
\end{figure}

\begin{figure}[!t]
    \centering
    \includegraphics[width=0.4\textwidth]{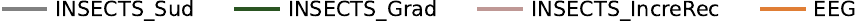}
    \begin{subfigure}[b]{0.22\textwidth}
        \centering
        \includegraphics[width=\textwidth]{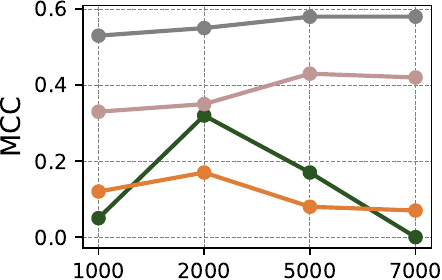}
        \caption{Window size $W$.}
        \vspace{0.15cm}
        \label{fig:sensitive_winsize_real}
    \end{subfigure}
    \begin{subfigure}[b]{0.22\textwidth}
        \centering
        \includegraphics[width=\textwidth]{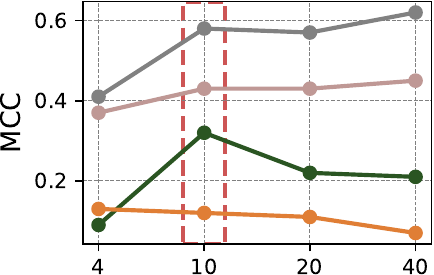}
        \caption{Number of sampling $k$.}
        \vspace{0.15cm}
        \label{fig:sensitive_k_real}
    \end{subfigure}
    \begin{subfigure}[b]{0.22\textwidth}
        \centering
        \includegraphics[width=\textwidth]{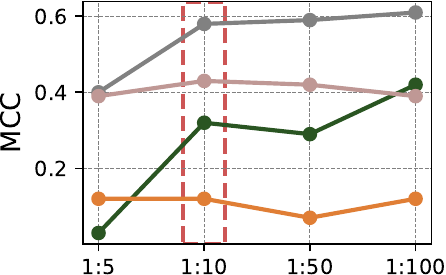}
        \caption{Ratio of noise $\epsilon_{\text{small}}:\epsilon_{\text{big}}$.}
        \vspace{0.15cm}
        \label{fig:sensitive_eps_real}
    \end{subfigure} 
    \begin{subfigure}[b]{0.22\textwidth}
        \centering
        \includegraphics[width=\textwidth]{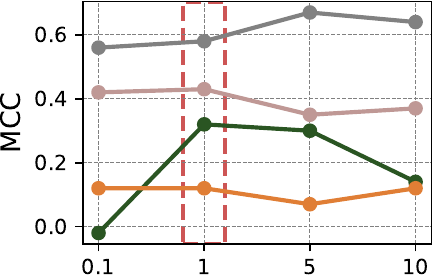}
        \caption{Regularization coefficient $\lambda$.}
        \vspace{0.15cm}
        \label{fig:sensitive_lamb_real}
    \end{subfigure}
    \vspace{-0.4cm}
    \caption{Sensitivity analysis results for real-world data sets.}
    \label{fig:sensitive_real}
    \vspace{-0.4cm}
\end{figure}

Figure \ref{fig:ablation_pre} shows the results of precision, while the results of other metrics showed similar trends (Appendix \ref{apdx:additional_results}). In summary, \algname{} achieved the highest precision across all data sets and the absence of each type of negative sample pairs leads to lower performances in most cases. Specifically, removing weak negative sample pairs resulted in diminished effectiveness in detecting gradual, subtle concept drifts, like GM\_Rec and EEG. Eliminating strong negative sample pairs adversely affected performance in more challenging drift detection scenarios (e.g., high overlap distributions, incremental drift), even failing to identify any drifts GamLog\_Sud, along with notably poor performances in GM\_Inc and INSECTS\_Grad. Retaining only positive sample pairs yielded comparable results in simpler tasks but significantly underperformed in more complex simulated and real-world data sets compared to strategies incorporating negative sample pairs. These ablation study findings demonstrate the efficacy of the sampling strategies proposed particularly in dealing with complex drift scenarios.

\begin{table}[t!]
\small
  \centering
  \caption{Window processing time (sec) over encoder sizes.}
  \vspace{-0.3cm}
  \begin{tabular}{@{}cccc@{}}
    \toprule
    Hidden Size & Sampling Time & Training Time & Inference Time \\
    \midrule
    50  & 0.0067 & 0.2106 & 0.0112 \\
    100 & 0.0065 & 0.2248 & 0.0117 \\
    150 & 0.0065 & 0.2297 & 0.0103 \\
    200 & 0.0069 & 0.2302 & 0.0120  \\
    250 & 0.0071 & 0.2456 & 0.0120  \\
    300 & 0.0067 & 0.2518 & 0.0121 \\
    \bottomrule
  \end{tabular}
  \label{tab:performance_time}
\vspace{-0.3cm}
\end{table}

\begin{figure}[t!]
    \centering
    \begin{subfigure}{\columnwidth}
        \centering
\includegraphics[width=\columnwidth, page=1]{Figures/sensitive_legend.pdf}
    \end{subfigure}
    \begin{subfigure}{\columnwidth}
        \centering
        \includegraphics[width=0.9\columnwidth, page=1]{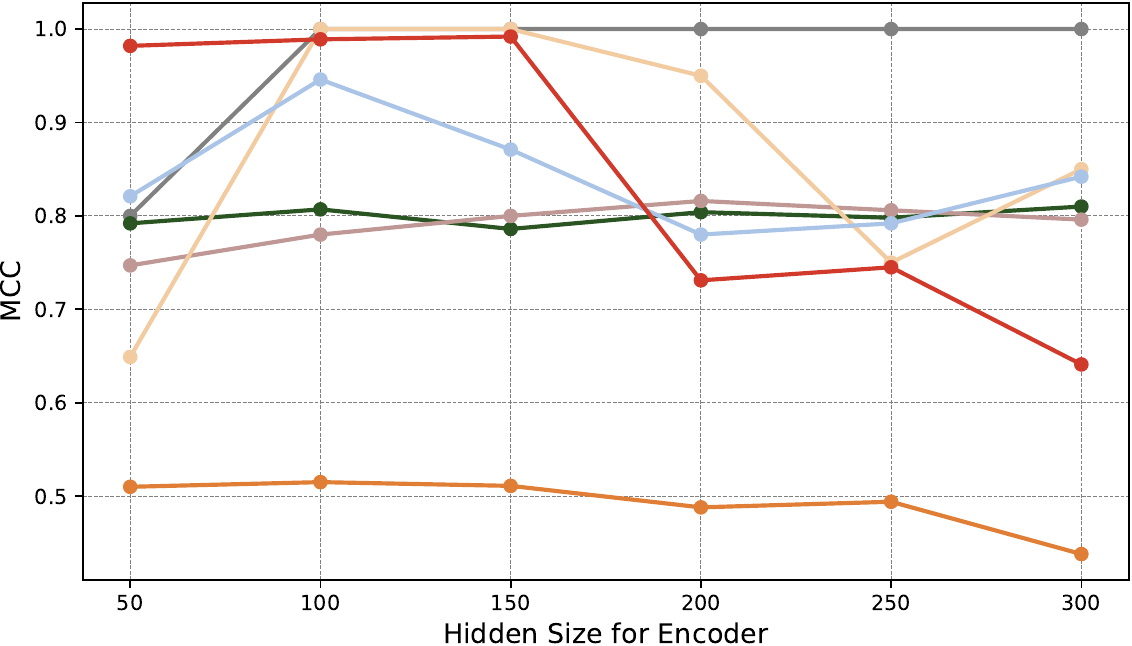}
    \end{subfigure}
    \vspace{-0.6cm}
    \caption{Detection accuracy over encoder sizes.}
    \label{fig:encoder1}
    \vspace{-0.7cm}
\end{figure}

\subsection{Sensitivity Analysis} 
\label{sec:sensitivity_analysis}
\subsubsection{\textbf{Effects of main Hyperparameters}} We conducted a sensitivity analysis of the main hyperparameters used in \algname{}: the \emph{sliding window size $W$}, the \emph{number of samples} $k$, the \emph{degree of noise} $\epsilon$, and the \emph{regularization coefficient} $\lambda$. Figure \ref{fig:sensitive} shows the MCC results on the synthetic data sets, and the results of other metrics showed similar trends. For $W$ we varied it from half to two times the default value. Figure \ref{fig:sensitive_winsize} demonstrates that \algname{} shows comparable performances over varying window sizes with the peak performances at window sizes around the default value. Since the window size determines the context of current concepts used for training the encoder, the smaller size is preferable in practice for efficiency, as long as it includes temporally distant different distributions. Regarding the number $k$ of sampling in each sub-window, Figure \ref{fig:sensitive_k} shows that the higher number of sampling leads to increased performance. Nevertheless, sampling frequencies beyond the default value ($k = 10$) lead to only marginal improvements. In constructing negative sample pairs, the perturbation degree $\delta$ with $\epsilon_{\text{small}}$ and $\epsilon_{\text{big}}$ controls the relative degrees of noise for weak and strong negative samples. Figure \ref{fig:sensitive_eps} indicates that too low or too high ratios of small and big noise perform poorly, particularly in challenging cases (e.g., GamLog\_Sud and LogGamWei\_Sud). The default ratios of $1:10$ were demonstrated to be the most optimal in most cases. Finally, regarding the regularization coefficient $\lambda$ for gradient penalty to ensure $L$-Lipschitz continuity in the optimization, Figure \ref{fig:sensitive_lamb} shows that either too light or too heavy penalties can reduce the model's effectiveness on challenging cases, making $\lambda=1$ a suitable choice.

Figure \ref{fig:sensitive_real} shows the sensitivity analysis results on real-world data sets, demonstrating similar trends with the synthetic data sets. While the default values for $W$ are not annotated in the figure given the varying lengths of each data set, it is found that setting $W$ to 10\% of the total length of each data set is a suitable choice.

\begin{figure*}[!t]
    \begin{subfigure}[b]{\textwidth}
        \centering
        \includegraphics[width=0.9\textwidth]{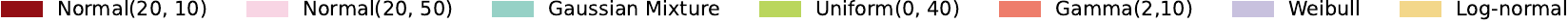}
        \vspace{0.2cm}
        
    \end{subfigure}
    
    \begin{subfigure}[b]{0.24\textwidth}
        \centering
        \includegraphics[width=\textwidth]{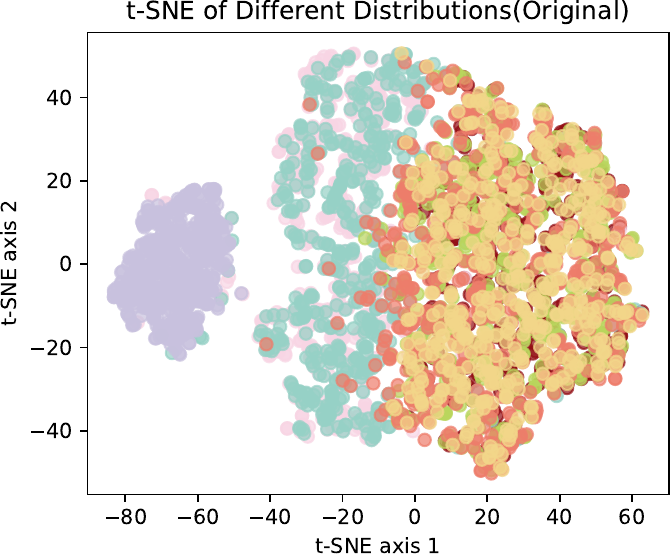}
        \vspace{-0.5cm}
        \caption{Orginal.}
        \label{fig:original}
    \end{subfigure}
    \begin{subfigure}[b]{0.24\textwidth}
        \centering
        \includegraphics[width=\textwidth]{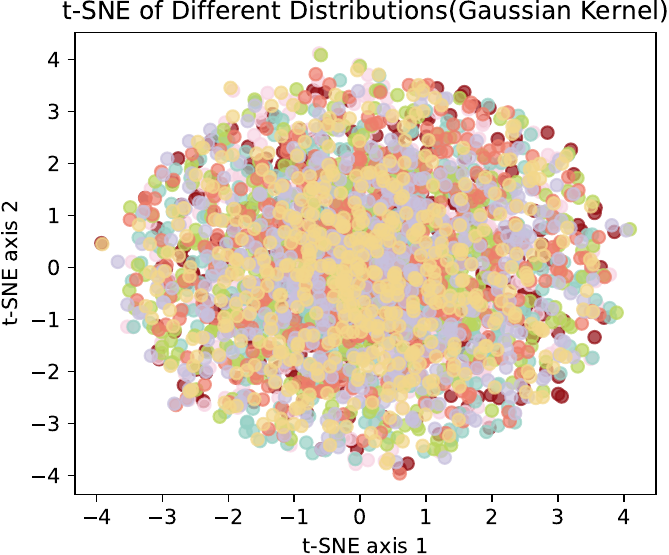}
        \vspace{-0.5cm}
        \caption{Gaussian Kernel.}
        \label{fig:gaussiankernel}
    \end{subfigure}
    \begin{subfigure}[b]{0.24\textwidth}
        \centering
        \includegraphics[width=\textwidth]{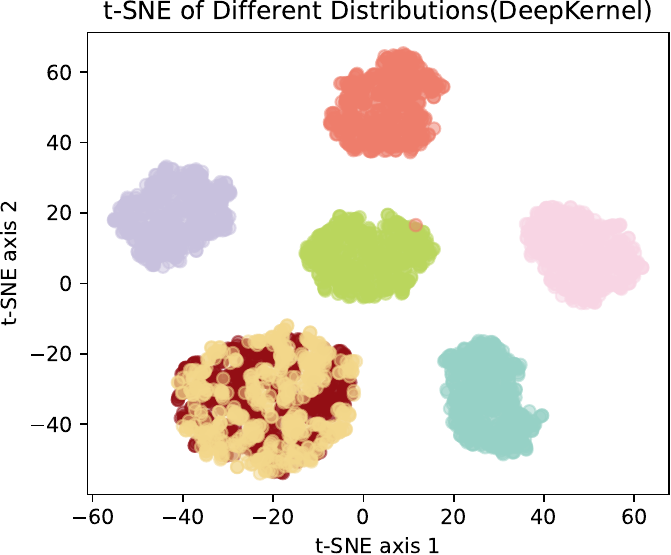}
        \vspace{-0.5cm}
        \caption{Deep Kernel.}
        \label{fig:deepkernel}
    \end{subfigure}
    \begin{subfigure}[b]{0.24\textwidth}
        \centering
        \includegraphics[width=\textwidth]{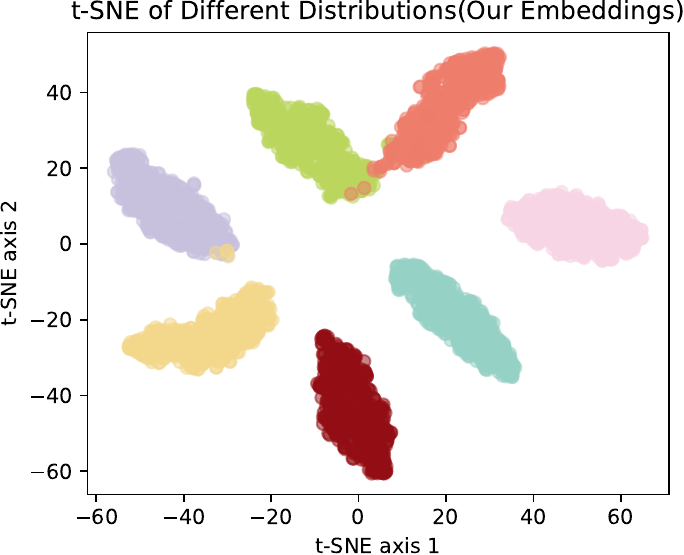}
        \vspace{-0.5cm}
        \caption{\algname{}.}
        \label{fig:ours}
    \end{subfigure}    
    \caption{Comparative visualization of discriminative embedding capabilities for complex distributions.}
    \label{fig:tsne}
\end{figure*}

\subsubsection{\textbf{Effects of Encoder Size}}
We also conducted a sensitivity analysis of the encoder on processing time (Table \ref{tab:performance_time}) and detection accuracy (Figure \ref{fig:encoder1}) by varying its hidden size from $50$ to $300$. The results indicate that, in general, the encoder's hidden size does not have a significant impact on performance. However, particularly in the data streams with complex drifts (e.g., GamLog\_Sud or GamGM\_SudGrad), the encoder with too low or too high hidden sizes degrades the detection accuracy owing to lacking expressiveness or overfitting in representing the complex concepts.

\subsection{Qualitative Analysis}
\label{sec:qualitative}
To more vividly showcase how \algname{} maps original data distributions to the appropriate embedding space, facilitating the identification of varying distribution changes, we visualized the embedding spaces generated from the compared algorithms in two-dimensional space by t-SNE\,\cite{van2008visualizing}.
The seven challenging distributions used to generate data points are detailed in Appendix \ref{apx:qualitative}. In Figure \ref{fig:tsne}, points in each t-SNE plot demonstrate the representations of a set of data, including 500 data points.
Different colors represent their different underlying probability distributions. 
Figure \ref{fig:original} demonstrates the difficulty of distinguishing distributions in the original space, showing highly intermingled clusters. 
Figure \ref{fig:gaussiankernel} shows that the Gaussian kernel mapping used by MMD-GK uniformly distributes all distributions across the space, aligning with the outcomes of prior studies\,\cite{liu2020learning}. While the deep kernel by MMD-DK achieved more separable embeddings, as shown in Figure \ref{fig:deepkernel}, \algname{} exhibits even better separation, particularly in handling more complex distributions. This aspect justifies the superior performance of \algname{} in concept drift detection across various scenarios.


\begin{figure}[t!]
    \centering
    \begin{subfigure}{\columnwidth}
        \centering
\includegraphics[width=\columnwidth]{Figures/sensitive_legend.pdf}
    \end{subfigure}
    \begin{subfigure}{\columnwidth}
        \centering
        \includegraphics[width=\columnwidth, page=1]{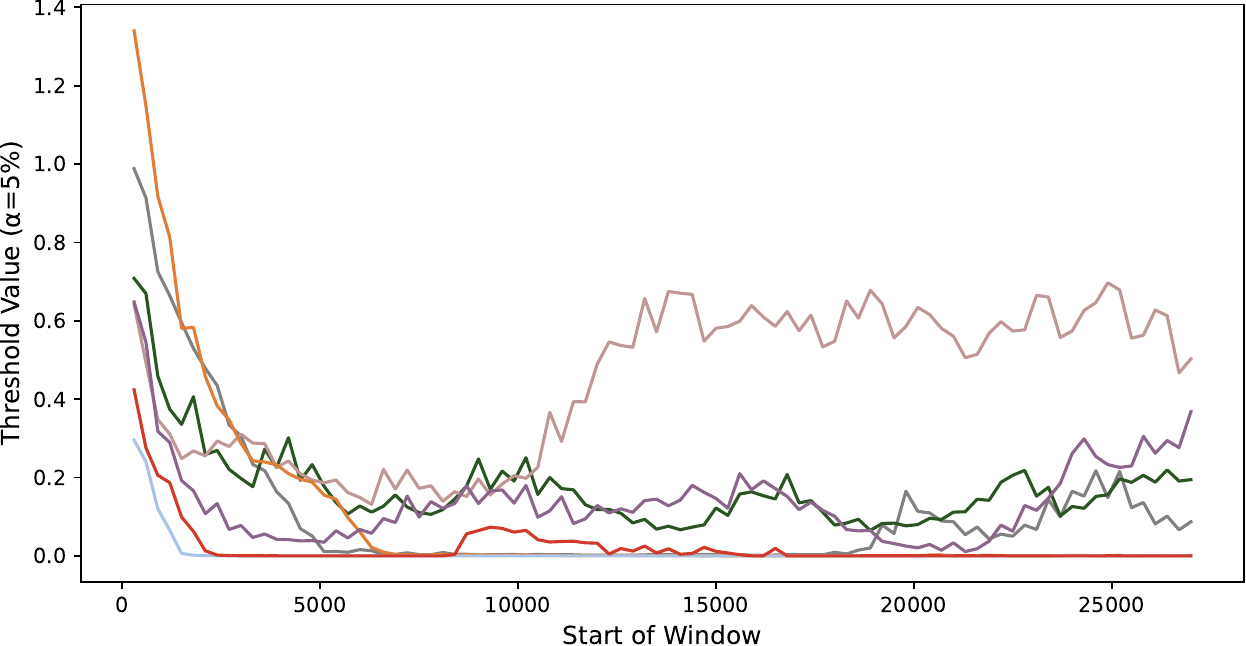}
    \end{subfigure}
    \caption{Changes of drift detection threshold \(\sigma\) over time.}
    \label{fig:threshold}
\end{figure}

\subsection{Drift Detection Threshold Analysis}
\label{sec:threshold}
As discussed in Section \ref{sec:drift_detector}, the drift detection threshold $\sigma$ of \algname{} is dynamically controlled by tracking the historical MCD values with a predefined statistical significance level (e.g., 0.05). To further understand its dynamicity, we analyzed the changes in the threshold over sliding windows in synthetic data sets with different drift types. Figure \ref{fig:threshold} shows that the thresholds were consistent over time in most cases, indicating that MCD is optimized robustly to varying drift types. GM\_Grad notably exhibited a significant increase in the threshold values after the first drift was observed around 10,000, but it quickly converged to a constant value and the corresponding detection accuracy remained high as shown in Section \ref{sec:overall_performance}.

\section{Conclusion}
We proposed \algname{}, an unsupervised online concept drift detection method, exploiting a new measure called maximum concept discrepancy. \algname{} leverages contrastive learning to obtain quality concept representations from sampled data points and optimize the maximum concept discrepancy. It is facilitated by sampling strategies based on temporal consistency and perturbations for robust optimization. The theoretical analysis demonstrated that \algname{} can also be used within the hypothesis testing framework. Experimental results on multiple synthetic and real-world data sets showed that the proposed method achieves superior detection accuracy and higher interpretability than existing baselines. 

For future work, we consider exploiting MCD histories learned throughout data streams. It can provide a systematic understanding of the patterns, duration, and strengths of concept drifts, as glimpsed in the heatmap visualization analyses. Further, assuming partial concept labels are available, adopting weak supervision philosophy can be promising. Estimating the degree of differences between partially labeled concepts can function as pseudo-labels to help us optimize the encoder more effectively.

\begin{acks}
This work was supported by ICT Creative Consilience Program through the Institute of Information \& Communications Technology Planning \& Evaluation(IITP) grant funded by the Korea government (MSIT) (IITP-2024-2020-0-01819) and the New Faculty Settlement Research Fund by Korea University. We thank Professor Yifan Cui from Zhejiang University for his valuable advice.

\end{acks}

\balance


\bibliographystyle{abbrv}

\clearpage
\appendix

\section{Appendix}

\subsection{Data Sets}
\label{apdx:datasets}

\subsubsection{\textbf{Synthetic Data Sets}}
\label{apdx:sdatasets}
The synthetic data sets used for model performance evaluation feature two types of drifts:\textbf{ primary }and \textbf{complex}. Primary drift tasks involve concept drifts that are relatively easy to distinguish within the data stream. Complex drift tasks, however, present higher distribution similarity, increased dimensionality, and a greater number of drift events, making them more challenging to discern.

For primary tasks, drift is simulated by adjusting the weighting of two Gaussian distributions, \( G_1 \) and \( G_2 \). Both distributions conform to a 5-dimensional Gaussian distribution with a mean vector \( \mu = [20, 20, 20, 20, 20] \) and covariance matrices \( \Sigma_1 = 10^2\mathbf{I} \) and \( \Sigma_2 = 50^2\mathbf{I} \), where \( \mathbf{I} \) denotes the identity matrix. These distributions form the basis for diverse Gaussian mixtures, with the weighting factor \( p \) controlling the proportion of each distribution in the mixture. The mixture model is represented by the equation: \(\mathcal{N}(\mu, \Sigma_1) \times p + \mathcal{N}(\mu, \Sigma_2) \times (1 - p)\). Each primary drift task involves a data set of 30,000 instances. Within this framework, different types of primary drift tasks are simulated by varying \( p \):

\textbf{\underline{Primary Task 1-GM\_Sud}}: Initially, \( p \) is set to 0.2. To induce a sudden drift at the 21,000th instance, \( p \) is shifted to 0.8, significantly changing the mixture's composition and simulating a sudden change in the underlying data structure.

\textbf{\underline{Primary Task 2-GM\_Rec}}: Initially, \( p \) is set to 0.8, giving prominence to \( G_1 \). At the 15,000th instance, \( p \) changes to 0.2, thus shifting the mixture's balance towards \( G_2 \). Finally, at the 25,000th instance, \( p \) returns to 0.8, reinstating the initial distribution emphasis. This pattern creates a reoccurring drift in the data stream.

\textbf{\underline{Primary Task 3-GM\_Inc}}: Initially set at 0.2, the weighting factor \(p\) undergoes specific adjustments to introduce incremental drifts at predetermined intervals within the data set. Specifically, \(p\) linearly increases from 0.2 to 0.8 between the 12,000th and 12,600th instances, then decreases back to 0.2 between the 18,000th and 19,200th instances, and finally increases again to 0.8 between the 24,000th and 25,200th instances. Outside these intervals, \(p\) remains constant, ensuring no drift occurs in the intervening segments. This design results in multiple incremental drifts across the data set.

These linear transitions facilitate incremental drifts, modifying the data distribution across two Gaussian components.

 \textbf{\underline{Primary Task 4-GM\_Grad}}: In the gradual drift task, \( p \) fluctuates between 0.2 and 0.8. The drift periods where \( p = 0.8 \) occur during the intervals (10000, 11000), (12001, 15000), and (18000, 21000). Conversely, in the intervening periods (11001,12000), (15001,18000), and (21001,24000), \( p \) reverts to 0.2. This oscillation creates a gradual drift pattern by alternating phases of drift and stability.

Complex drift tasks, conversely, involve a mixture of distributions like Gamma, Lognormal, and Weibull. These distributions exhibit substantial overlap in their probability density functions and possess similar statistical characteristics, leading to lower discriminability and making the detection of drifts more challenging. These tasks are further compounded by introducing multiple drifts of different natures within the data stream. Also, each complex drift task involves a data set of 30,000 instances, where every dimension conforms to the same distribution pattern, ensuring consistency across the multidimensional data space.

\textbf{\underline{Complex Task 1-GamLog\_Sud}}: Initially, data is generated from a Gamma distribution (\(\text{Gamma}(1.5, 20)\)) across 5 dimensions. At the 21,000th instance, there is a sudden shift to a 5-dimensional Log-normal distribution with parameters \(\mu = \log(30) - 0.5\) and \(\sigma = 0.5\). This transition represents a complex and sudden drift, with the overlap between the two distributions making the drift challenging to identify.
    
\textbf{\underline{Complex Task 2-LogGamWei\_Sud}}: The data stream, consisting of 20 dimensions, initially follows a Log-normal distribution (\(\text{Lognormal}(\log(30) - 0.5, 0.5)\)). At the 15,000th instance, there is a sudden drift to a Gamma distribution (\(\text{Gamma}(1.5, 20)\)), and at the 24,000th instance, it transitions to a Weibull distribution (\(\text{Weibull}(1.5, 20)\)). These successive drifts add higher complexity.

\textbf{\underline{Complex Task 3-GamGM\_SudGrad}}: This data stream consists of 20 dimensions, each following the same distribution. Initially, the data follows a Gamma distribution (\(\text{Gamma}(2, 10)\)). At the 11,000th instance, a sudden drift occurs, transitioning the data to a Gaussian mixture. Subsequently, the task experiences gradual drifts, where the weighting factor \( p \) alternates between 0.2 and 0.8 during specific intervals. This alternation leads to shifting dominance between two Gaussian distributions (\(\mathcal{N}(20, 10^2)\) and \(\mathcal{N}(20, 50^2)\)), creating a complex pattern of both sudden and gradual drifts.

\subsubsection{\textbf{Real-World Data Sets}}
\label{apdx:rdatasets}
Here we provide detailed descriptions of the real-world data sets employed to evaluate the performance of our detector: \textit{INSECTS} and \textit{EEG}. These data sets are instrumental in assessing how effectively the detector adapts to real-world concept drift scenarios.

\textbf{\underline{INSECTS}}: The INSECTS data sets consist of optical sensor readings obtained from monitoring mosquitoes. Concept drifts are induced by varying temperatures, which affect the insects' activity levels in alignment with their circadian rhythms. This data set offers a dynamic setting of concept drifts. We utilized three specific data sets from the collection, each representing one or more distinct types of drift: (i) \textbf{INSECTS\_Sud:} This subset exhibits five sudden drifts, initiated at a temperature of 30°C, with a sudden shift to 20°C, and subsequently to approximately 35°C among other changes. The stream captures several rapid transitions in temperature, illustrating sudden concept drifts throughout. (ii) \textbf{INSECTS\_Grad:} Illustrating both gradual and incremental drifts, this data set simulates a scenario where temperatures slowly transition over time, presenting a nuanced evolution of environmental conditions. (iii) \textbf{INSECTS\_IncreRec:} Features a unique pattern of reoccurring incremental drifts, where temperature gradually increases or decreases in cycles. This data set is pivotal for studying the model's performance in scenarios where drift patterns repeat over time, challenging the detection mechanism with both incremental and reoccurring drift phenomena.

\textbf{\underline{EEG}}: The EEG data set encompasses multivariate time-series data from a single continuous EEG recording using the Emotiv EEG Neuroheadset over a span of 117 seconds. Eye states were captured through video recording concurrent with the EEG data collection and were subsequently annotated manually to indicate moments of eye closure ("1") and eye opening ("0"). This data set provides a sequential record of neurological activity, with values arranged in the order they were measured, presenting a unique challenge for detecting shifts in physiological states over time.

\subsection{Implementation Details}
\label{apdx:implementation_detail}
\subsubsection{\textbf{Implementation of compared algorithms}}
For \algname{}, we set $m = 30$ (or $50$ for a larger sub-window size such as in INSECTS\_IncreRec), $k = 10$, $\lambda = 1$, $\epsilon_{small} = 1$, $\epsilon_{big} = 10$, and $L=1$. The encoder function was implemented as a two-layer MLP with a ReLU activation function. For synthetic data sets, the two-layer encoder features 100 units in both hidden and output layers. For the INSECTS data set, reflecting its more complex drift types and higher dimensionality, the encoder dimensions are increased to 200 (hidden) and 150 (output). For the EEG data set, which involves smaller window sizes, the dimensions are adjusted to 150 (hidden) and 100 (output). In all data sets, the encoder function has been trained for a single epoch with a learning rate of 0.005 for every sliding window. For the implementation of baselines, we referred to \textit{alibi-detect}\,\cite{alibi-detect} and used sufficiently high permutations (200) for the statistical method and the default projection and the best epochs for the learning-based method.

\subsubsection{\textbf{Evaluation Scheme}} For a fair comparison, all algorithms followed the prequential evaluation scheme\,\cite{Gam13} and used the same sliding window with
the window size $W$ of 10\% of the total length of each data set and $N_{sub} = 10$. Each algorithm compares every new sub-window with the preceding one in a window to identify concept drift with a significance level of 0.05 if applicable.

\subsubsection{\textbf{Computing Platform}} All experiments were conducted on a Linux server equipped with an Intel Xeon CPU @ 2.20GHz, 12GB RAM, and 226GB of storage where Ubuntu 22.04 LTS, Python 3.10, and PyTorch 2.1.0+cu122 were installed. An NVIDIA Tesla V100-SXM2-16GB GPU was used for the deep learning-based algorithms.

\subsection{Additional Performance Analysis Results}
\label{apdx:additional_results} 
For subtle and slow drift types, we have considered adaptively adjusting the training process to achieve better detection outcomes. In this regard, an incremental drift was introduced by linearly transitioning the weighting factor \(p\) from 0.0 to 1.0 between the 15,000th and 24,000th instances. Specifically, for this incremental drift scenario, a specialized training strategy was implemented that alternates between exclusively using positive sample pairs and a mix of both positive and negative sample pairs, aimed at better adapting to and learning the nuanced shifts present. Keeping other parameters constant, \algname{} achieved a precision of 0.80, significantly outperforming other baseline algorithms with a maximum precision of 0.55. Similar to the visualization discussed in Section \ref{sec:overall_performance}, Figure \ref{fig:spcial_hm} illustrates the progression of MCD in this scenario.

In addition, Figures \ref{fig:heatmap_insects_grad} and \ref{fig:heatmap_insects_incre} show the heatmaps of MCD for the other two types of INSECTS data sets. While it is challenging to accurately identify the exact starting points for gradual and incremental drifts in real data streams, the heatmap shows higher MCD values around the true drift indicators (in the yellow boxes). Figure \ref{fig:ablation_study_plus} shows the ablation study results with F1 and MCC. Table \ref{tab:performance_metrics} compares the performance of \algname{} with ADWIN\,\cite{bifet2007learning} adopted for an unsupervised setting with unidimensional data stream.

\begin{figure}[!t]
\centering
\begin{subfigure}{\columnwidth}
    \centering
    \includegraphics[width=\columnwidth]{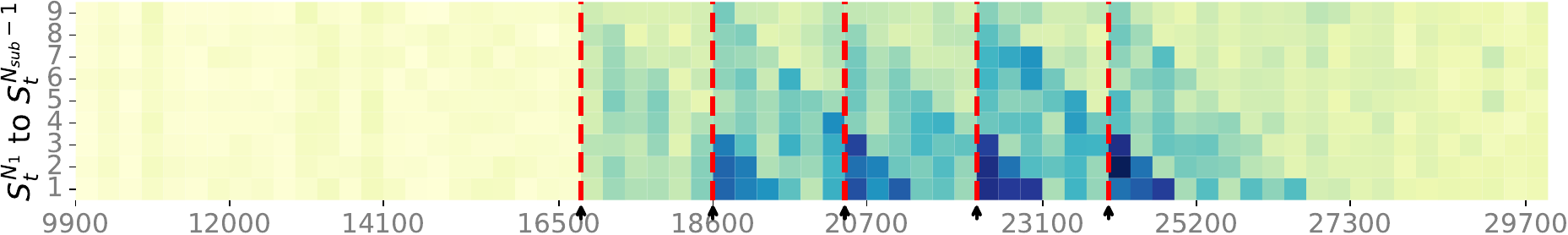}
    \vspace{-0.6cm}
    \caption{Heatmap of MCD for subtle and slow drifts.}
    \label{fig:spcial_hm}
\end{subfigure}
\hfill
\begin{subfigure}{\columnwidth}
  \centering
  \includegraphics[width=\linewidth]{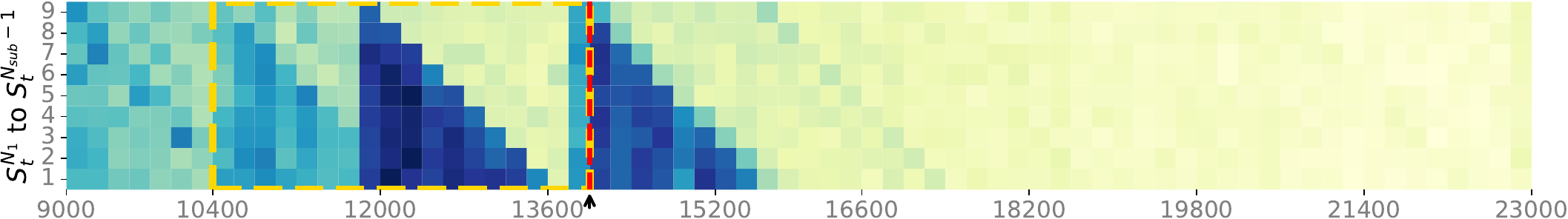}
  \vspace{-0.5cm}
  \caption{Heatmap for INSECTS\_Grad with drift indicators.}
  \label{fig:heatmap_insects_grad}
\end{subfigure}
\hfill
\begin{subfigure}{\columnwidth}
  \centering
  \includegraphics[width=\linewidth]{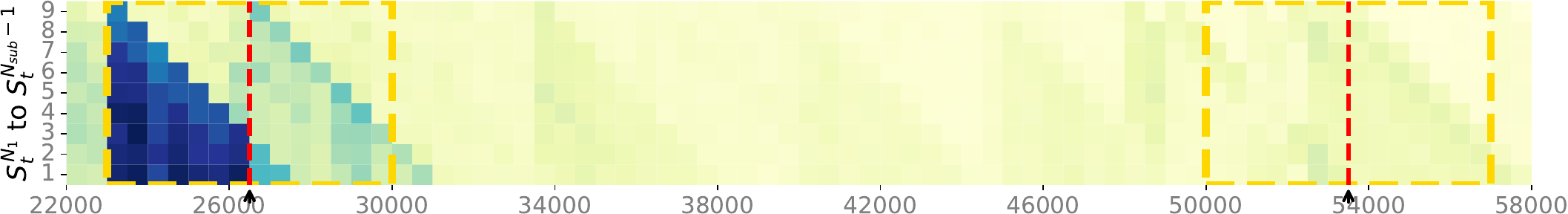}
  \vspace{-0.5cm}
  \caption{Heatmap for INSECTS\_IncreRec with drift indicators.}
  \label{fig:heatmap_insects_incre}
\end{subfigure}
  \vspace{-0.6cm}
\caption{Heatmaps of MCD for other data sets.}
\label{fig:heatmap_others}
\vspace{-0.2cm}
\end{figure}

\begin{figure}[!t]
    \centering
    \includegraphics[width=\columnwidth]{Figures/ablation_legend.pdf}
    \begin{subfigure}{0.9\columnwidth}
        \includegraphics[width=0.9\columnwidth]{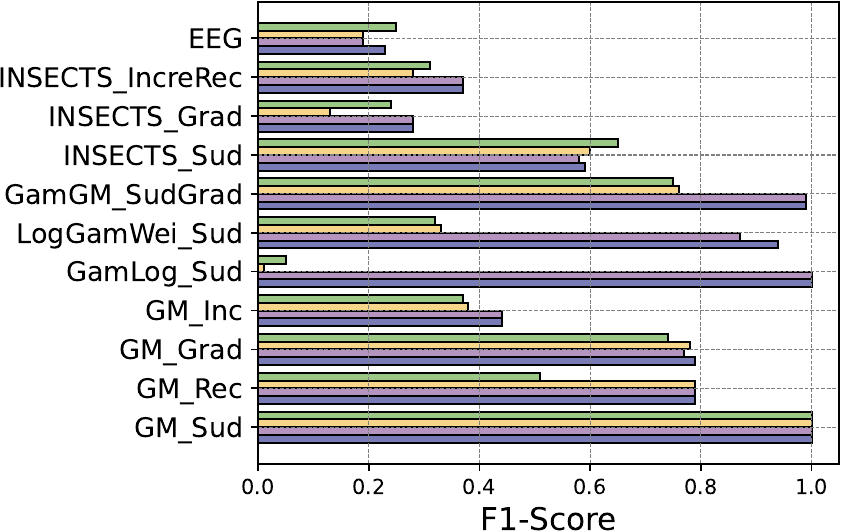}
        \label{fig:ablation_f1}
    \end{subfigure}
    \begin{subfigure}{0.9\columnwidth}
        \includegraphics[width=0.9\columnwidth]{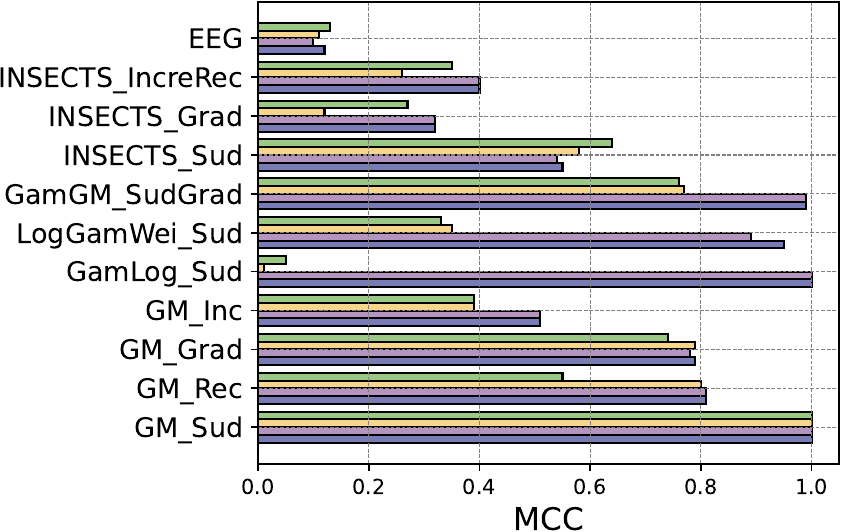}
        \label{fig:ablation_mcc}
    \end{subfigure}
    \vspace{-0.3cm}
    \caption{Ablation study results of F1 and MCC.}
    \label{fig:ablation_study_plus}
\vspace{-0.2cm}
\end{figure}





\begin{table}[t]
\small
\centering
\caption{Performance with unidimensional streams.}
\vspace{-0.3cm}
\label{tab:performance_metrics}
\begin{tabular}{@{}lcccccc@{}}
\toprule
 & \multicolumn{3}{c}{\textbf{\algname{}}} & \multicolumn{3}{c}{\textbf{ADWIN}} \\
 Data set & \textit{Pre.} & \textit{F1} & \textit{MCC}& \textit{Pre.} & \textit{F1} & \textit{MCC} \\
\midrule
INSECTS\_Sud & 0.585 & 0.503 & 0.472 & 0.667 & 0.244 & 0.332 \\
\bottomrule
\end{tabular}
\end{table}

\smallskip
\subsection{Details of Qualitative Analysis}
\label{apx:qualitative}
For the seven distributions, Dist1 is a normal distribution $\mathcal{N}(20, 10^2)$, generating samples in a 5-dimensional space. Dist2 is a normal distribution with increased variance $\mathcal{N}(20, 50^2)$. Dist3 is a mixture of two normal distributions, giving each sample a 50\% probability of originating from either $\mathcal{N}(20, 10^2)$ or $\mathcal{N}(20, 50^2)$. Dist4 is a uniform distribution spanning from 0 to 40, $\mathcal{U}(0, 40)$. Dist5 follows a gamma distribution with shape and scale parameters set to 2 and 10, respectively, $\text{Gamma}(2, 10)$. Dist6 utilizes a Weibull distribution with shape and scale parameters of 2 and 20, $\text{Weibull}(2, 20)$. Lastly, Dist7 is a log-normal distribution chosen to approximate a mean close to 20 by setting a standard deviation of 0.5 and a location parameter $\mu$ to $\log(20) - \frac{0.5^2}{2}$, $\text{Lognormal}(\mu, 0.5^2)$.


\end{document}